\documentclass{article}
\PassOptionsToPackage{square, numbers, compress}{natbib}
\usepackage[final]{neurips_2023}

\usepackage{microtype}
\usepackage{subcaption}
\usepackage{graphicx}
\usepackage[linesnumbered,ruled]{algorithm2e}
\usepackage{multirow}
\usepackage[table]{xcolor}
\usepackage{booktabs}
\usepackage{hyperref}
\usepackage{todonotes}
\usepackage{float}

\usepackage{wrapfig}

\usepackage[normalem]{ulem}
\usepackage{amsmath,bm}
\usepackage{amssymb}
\usepackage{mathtools}
\usepackage{amsthm}
\usepackage{bbm}
\usepackage{scalerel,stackengine}
\usepackage[capitalize,noabbrev]{cleveref}

\usepackage{pifont}% http://ctan.org/pkg/pifont
\newcommand{\cmark}{\ding{51}}%
\newcommand{\xmark}{\ding{55}}%

%% Math Definitions
\theoremstyle{plain}
\newtheorem{theorem}{Theorem}[section]
\newtheorem{proposition}[theorem]{Proposition}

\theoremstyle{definition}

\theoremstyle{remark}

\DeclareMathOperator*{\argmax}{\arg\!\max}

\DeclareMathOperator*{\E}{\mathbb{E}}

\newcommand{\R}{\mathbb{R}}
\newcommand{\ind}{\mathbbm{1}}

% The true underlying data generating distribution
\newcommand{\xvar}{X}
\newcommand{\tvar}{Y}
\newcommand{\yvar}{\tvar}
\newcommand{\bvar}{Z}
\newcommand{\zvar}{\bvar}

\newcommand{\xobs}{x}
\newcommand{\tobs}{y}
\newcommand{\yobs}{\tobs}
\newcommand{\bobs}{z}
\newcommand{\zobs}{\bobs}

\newcommand{\Xspace}{\mathcal{X}}
\newcommand{\tspace}{\mathcal{Y}}
\newcommand{\yspace}{\tspace}
\newcommand{\bspace}{\mathcal{Z}}

\stackMath
\newcommand\reallywidehat[1]{%
\savestack{\tmpbox}{\stretchto{%
  \scaleto{%
    \scalerel*[\widthof{\ensuremath{#1}}]{\kern.1pt\mathchar"0362\kern.1pt}%
    {\rule{0ex}{\textheight}}%WIDTH-LIMITED CIRCUMFLEX
  }{\textheight}% 
}{2.4ex}}%
\stackon[-6.9pt]{#1}{\tmpbox}%
}

\DeclareMathOperator*{\Supp}{Supp}

\def\est#1{\hat{#1}}

\newcommand{\psource}{p_\text{data}}
\newcommand{\estpsource}{\hat{p}_\text{data}}

\newcommand{\ptarget}{p_\text{test}}
\newcommand{\estptarget}{\hat{p}_\text{test}}
\newcommand{\Ptarget}{\mathbb{P}^{(\text{test})}}
\newcommand{\train}{\mathcal{D}}
\newcommand{\valid}{\mathcal{D_{\mathrm{valid}}}}
\newcommand{\test}{\mathcal{D_{\mathrm{test}}}}

% Strikeout
\newcommand{\stkout}[1]{\ifmmode\text{\sout{\ensuremath{#1}}}\else\sout{#1}\fi}

% Tables styling
\addtolength{\tabcolsep}{-3pt}

\begin{document}

% \twocolumn[
\title{Group Robust Classification\\Without Any Group Information}

\author{%
  Christos Tsirigotis\thanks{Work done during internship at ServiceNow Research. Author correspondence at \url{tsirigoc@mila.quebec}.}\\
  Universit\'e de Montr\'eal, Mila, ServiceNow Research
  \And
  Joao Monteiro \\
  ServiceNow Research
  \And
  Pau Rodriguez \\
  Apple MLR
  \And
  David Vazquez \\
  ServiceNow Research
  \And
  Aaron Courville\thanks{This work was also funded, in part, from A. Courville's Sony Research Award. Courville also acknowledges support from his Canada Research Chair.} \\
  Universit\'e de Montr\'eal, Mila, CIFAR CAI Chair
}
% \vskip 0.3in
% ]

% \printAffiliationsAndNotice{}
\maketitle
\begin{abstract}
Empirical risk minimization (ERM) is sensitive to spurious correlations in the training data, which poses a significant risk when deploying systems trained under this paradigm in high-stake applications. While the existing literature focuses on maximizing group-balanced or worst-group accuracy, estimating these accuracies is hindered by costly bias annotations. This study contends that current bias-unsupervised approaches to group robustness continue to rely on group information to achieve optimal performance. Firstly, these methods implicitly assume that all group combinations are represented during training. To illustrate this, we introduce a systematic generalization task on the MPI3D dataset and discover that current algorithms fail to improve the ERM baseline when combinations of observed attribute values are missing. Secondly, bias labels are still crucial for effective model selection, restricting the practicality of these methods in real-world scenarios. To address these limitations, we propose a revised methodology for training and validating debiased models in an entirely bias-unsupervised manner. We achieve this by employing pretrained self-supervised models to reliably extract bias information, which enables the integration of a logit adjustment training loss with our validation criterion. Our empirical analysis on synthetic and real-world tasks provides evidence that our approach overcomes the identified challenges and consistently enhances robust accuracy, attaining performance which is competitive with or outperforms that of state-of-the-art methods, which, conversely, rely on bias labels for validation.
\end{abstract}

\section{Introduction}
\label{sec:intro}

Supervised learning algorithms typically rely on the empirical risk minimization (ERM) paradigm -- minimizing the average loss on a training set.
The ERM paradigm operates under the assumption that the training data is a representative sample of the true data distribution \citep{Vapnik1998}.
Consequently, models that achieve low expected loss can still be \emph{unfair} when the model is tasked to predict outcomes for underrepresented groups~\citep{hovy2015tagging, blodgett2016demographic, tatman2017gender}, and prone to \emph{rely on spurious correlations} between target annotations and generative attributes of the training data~\citep{gururangan2018annotation, sagawa2020investigation} that do not hold up under more general testing conditions. As automated predictors are deployed in failure-critical applications~\citep{eykholt2018robust} or interact with society, where fairness must be guaranteed~\citep{caton2020fairness, hendricks2018women}, robustness and fairness become requirements that standard learning strategies do not satisfy.

These limitations have motivated the search for alternatives that perform uniformly across different data subgroups~\citep{wang2018learning, Sagawa2020GroupDRO} or, equivalently, that rely less on spurious features~\citep{arjovsky2019invariant} or ``shortcuts''~\citep{geirhos2020shortcut}. Part of solving this problem includes making the correct assumptions about data, which should consider spurious statistical correlations due to bias in its generative process. A popular example of this issue is the \emph{cow vs. camel} classification problem~\cite{arjovsky2019invariant}. 
\begin{wrapfigure}{r}{0.65\textwidth}
% \begin{figure}[!t]
\centering
\includegraphics[width=\linewidth]{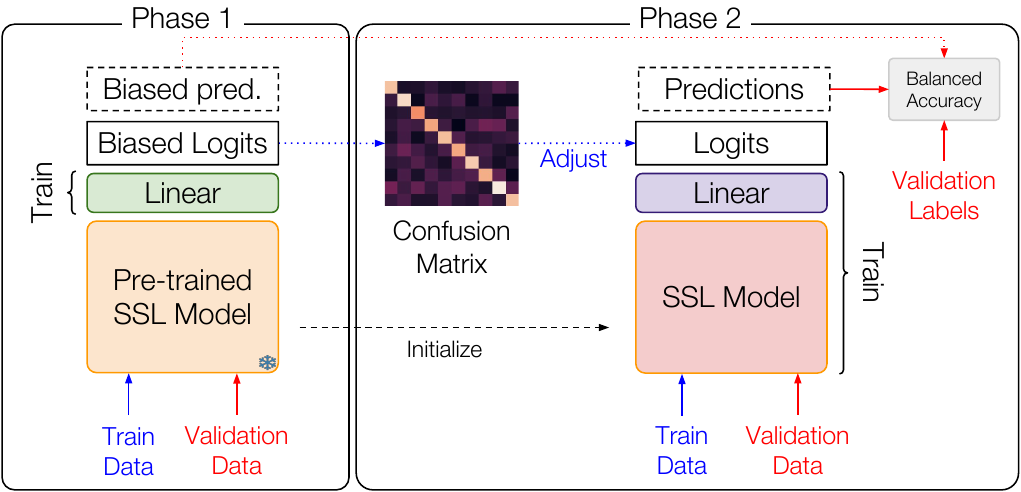}
\caption{\textbf{Unsupervised Logit Adjustment (uLA).} We train a linear classifier on top of an SSL pre-trained model to obtain biased predictions. These predictions are then leveraged to train a debiased model. No bias information is used during training or cross-validation.}\label{fig:approach}

\vspace{-1.5em}
% \end{figure}
\end{wrapfigure}
As one might expect, pictures of cows very often contain a grass background, while camels are usually depicted in a desert. As such, a binary classifier that predicts whether there is grass in the background of an image could achieve a high prediction accuracy on top of natural images of the two animals. However, such a classifier would fail whenever the background changes from the typical occurrences.

% explicit group info
While progress has been made, existing methods to improve robustness on the least frequent attribute combinations (\emph{e.g.}, cow on sand), require knowledge about the bias attributes (bias-supervision) either during training~\citep{bahng2020learning, Sagawa2020GroupDRO}, or validation~\citep{nam2020lfl,liu2021just,liu2022avoiding}. This limits their applicability in practical scenarios where bias annotations could be too costly or impossible to obtain. Even in cases where they can be obtained somewhat efficiently, human annotators might be biased themselves or sensitive annotations may not be readily available due to privacy concerns~\citep{zhao2022privacyalert}. 

% Implicit group info, sys gen
Moreover, access to group information is rarely satisfiable on natural data, where a \emph{curse of generative dimensionality} implies that given the high number of attributes controlling the generative process, no realistic finite sample can cover all possible - exponential in number - combinations of such attributes. In fact, many group robustness algorithms that rely on data rebalancing or loss reweighting techniques \citep{Sagawa2020GroupDRO,liu2021just} assume that all group combinations are present during training. However, this condition is not satisfied in systematic generalization tasks, where certain data subgroups that are present in test data, are absent from training data. This motivates us to study the extent that training algorithms can successfully operate using group information implicitly present in the training dataset. 

% how sys gen is studied
% This kind of training environment is further motivated by the fact that natural data is prone to such kinds of distribution shifts since the amount of generative factors is so vast that it is not computationally feasible to observe all of them in all their possible combinations by i.i.d. sampling. 

% Introduce also hidden dependencies on assumptions about the group structure.. extreme cases like systematic generalization
Specifically, in the case of these challenging systematically split datasets, we study the generalization of models on unseen combinations of attributes that were independently observed during training. This combinatorial type of generalization draws its name from the cognitive property of systematicity \citep{FODOR19883}. In deep learning, this question was introduced for the first time by \citet{pmlr-v80-lake18a}, who studied the systematicity of RNN models in sequence prediction tasks. Here, we raise the same question in classification. To do so, we introduce a benchmark consisting of systematic splits from the \textsc{MPI3D} (thereby named \textsc{sMPI3D}) image dataset~\citep{Gondal2019mpi3d}.In particular, we use the `\textit{real}' split of \textsc{MPI3D} which consists of photographs of a robotic arm that has a colored rigid object attached to its end effector. The images are captured in a way that controls their generative attributes, such as the shape of the rigid object or the position of the robotic arm. We use it to ask the following question: given a `shape' classifier that has been trained on objects of all possible shapes and colors but only a subset of their combinations (e.g., red cubes and blue spheres), how would it perform for a new color-shape combination (e.g., blue cubes)? In this example, the `color' attribute plays the role of a \textit{bias}, obstructing view of all possible colors that cubes could have. In \Cref{app:smpi3d} we describe in more detail the construction of the \textsc{sMPI3D} task, while in \cref{fig:mpi3d-splits} we illustrate an example of a systematic split.

% What do we want to achieve in the end
As discussed above, existing state-of-the-art methods suffer from two limitations. The first one, as demonstrated by \citet[Table 1]{asgari2022masktune}, is that the robust accuracy of many recent training algorithms, which do not demand any bias annotations during training, degrades severely when there is no access to bias labels during model selection. This highlights that robust algorithms should prescribe a way of performing bias-unsupervised validation, assuming the same access to i.i.d. data resources as training. The second, revealed by our study on \textsc{sMPI3D} (\Cref{subsec:smpi3d_results}), is that in most cases these methods fail to improve over the ERM baseline, which fails to systematically generalize. 

% sMPI3D reveals a dramatic drop in performance of current state-of-the-art methods due to their reliance on knowing all biased attribute combinations in advance. Notably, we introduce training schemes resulting in significantly more robust models compared to existing methods on this kind of distribution shift.

To address these issues, our approach, summarized in \Cref{fig:approach}, is based on pretraining a base encoder using self-supervised learning \citep{balestriero2023cookbook} to extract a proxy for the missing bias labels and to provide an initialization point for finetuning a debiased model. Pretraining a proxy for the bias variable enables us to use this network for two purposes: first, to train a group-robust network and, second, to define a validation criterion for robust model selection. The debiasing training algorithm is based on the logit adjustment paradigm~\citep{Menon2021Consistency}. Our entirely bias-unsupervised methodology to group robustness using logit adjustment, which we call \textsc{uLA}, is able to compete with or outperform state-of-the-art counterparts, that otherwise utilize bias labels during model selection, in synthetic and real-world benchmarks. At the same time, it is the only method to consistently offer improvements over the ERM baseline in the \textsc{sMPI3D} systematic generalization task; thus effectively tackling the identified challenges about explicit or implicit use of group information.

\section{Preliminaries}
\label{sec:preliminaries}

\begin{figure*}[t]
     \centering
     \vspace{-10pt}
     \begin{subfigure}[b]{0.23\textwidth}
         \centering
         \includegraphics[width=\textwidth]{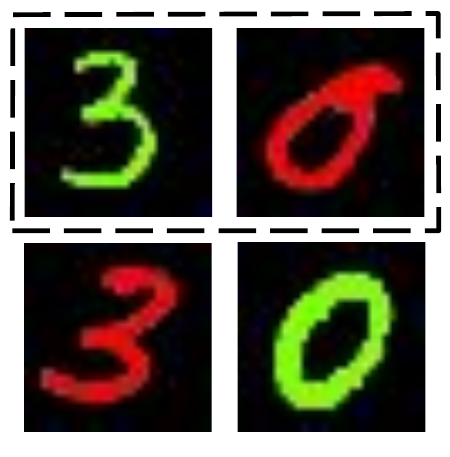}
         \caption{\textsc{cMNIST}}
         \label{fig:colored-mnist}
     \end{subfigure}
     \hfill
     \begin{subfigure}[b]{0.23\textwidth}
         \centering
         \includegraphics[width=\textwidth]{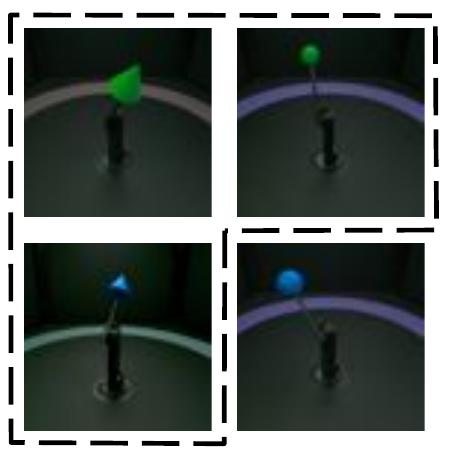}
         \caption{\textsc{sMPI3D}}
         \label{fig:mpi3d}
     \end{subfigure}
     \hfill
     \begin{subfigure}[b]{0.23\textwidth}
         \centering
         \includegraphics[width=\textwidth]{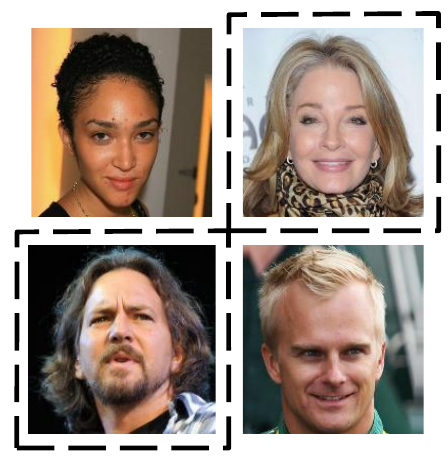}
         \caption{\textsc{CelebA}}
         \label{fig:celeba}
     \end{subfigure}
     \hfill
     \begin{subfigure}[b]{0.28\textwidth}
         \centering
         \includegraphics[width=\textwidth]{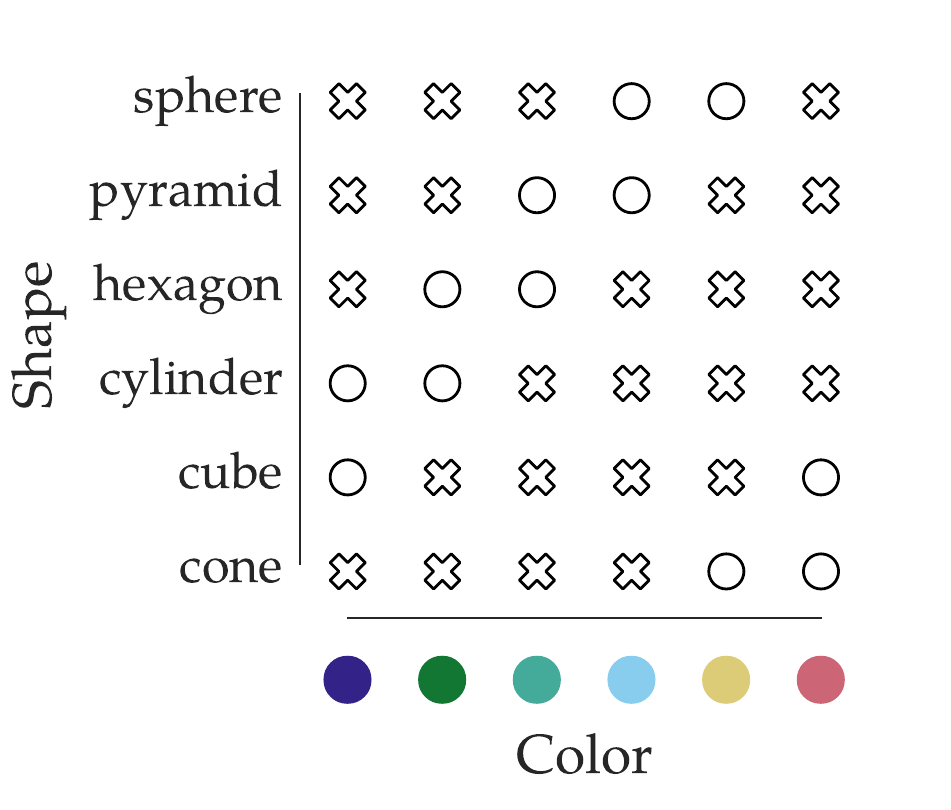}
         \caption{A $C=4$ split of \textsc{sMPI3D}}
         \label{fig:mpi3d-splits}
     \end{subfigure}
     \hfill
  \caption{\textbf{Left}: Example tasks. Circled in dashed lines are samples from training split, exhibiting statistically major attribute groups. Those outside are example test samples, where all groups are equally considered. A classifier trained on a biased training set may misclassify a bias-conflicting test sample - recognizing a ‘red three’ as ‘zero’ (\ref{fig:colored-mnist}) or a ‘blue sphere’ as a ‘cone’ (\ref{fig:mpi3d}) - or be unfair to the sensitive gender attribute when tasked to classify a facial attribute, e.g. hair color (\ref{fig:celeba}). \textbf{Right}: An example of a systematic split. $C$ is the number of color values per shape. ‘Crosses’ represent groups used to sample the training and validation splits, while ‘circles’ are entirely out-of-distribution.}
  \label{fig:problem-instances}
\end{figure*}

\textbf{Problem Formulation.\,} Consider a multi-class classification task, where $\xvar \subset \Xspace$ is the input variable and $\tvar \subset \yspace$, the categorical target variable with $|\tspace| = K$ classes.
We have access to a dataset of observations $\train \coloneqq \left\{(\xobs_n, \tobs_n)\right\}_{n=1}^N$ sampled i.i.d. from an underlying data distribution $\psource(\xvar, \tvar)$ over $\Xspace \times \tspace$. %Given a scoring function $f: \Xspace \to \R^L$, we define predictions as $\est{\tobs}(\xobs; f) \coloneqq \argmax_{\tobs} f(\xobs)_{\tobs}$, with associated utility given by the top-1 accuracy under the data distribution, $\mathrm{Acc}_{\yvar|\xvar}(f; \psource)$. Top-1 accuracy with respect to a distribution $p$ is defined as $\mathrm{Acc}_{\yvar|\xvar}(f; p) \coloneqq \E_{\xobs,\tobs \sim p} \ind\big(\tobs = \est{\tobs}(\xobs; f)\big)$,where $\ind$ is the indicator function.
% Typically, we estimate this quantity by averaging across a dataset $\valid$, which also contains observations sampled i.i.d from $\psource$, but it is held out from the training process of the classifier in the sense that $\train \cap \valid = \varnothing$. This estimated quantity can then be used for selecting the best-performing model.
% \begin{figure}[t]
% \begin{center}
% \centerline{\includegraphics[width=0.8\columnwidth]{03_problem_statement/causal_disentangled_process.001.png}}
% \vskip -0.25in
% \caption{The training dataset is created by sampling from $\psource$, in which the target ($\tvar$) and the bias generative attribute  ($\bvar$), are confounded by a latent ($\cvar$). During testing, we demand that the trained classifier generalizes to pairs $(\xvar, \tvar)$ sampled from $\ptarget$. The distribution shift affects the joint $\ptarget(\tvar, \bvar)$, which is now uniform over $\Supp \psource(\tvar) \times \Supp \psource(\bvar)$, whereas the generative mechanism is left invariant, i.e., $\ptarget(\xvar | \tvar, \bvar) = \psource(\xvar | \tvar, \bvar)$.}
% \label{fig:problem-setup}
% \end{center}
% \vskip -0.2in
% \end{figure}
The setting above may become problematic once we consider that the deployment data $\test$ are sampled from a different testing distribution: $\ptarget \neq \psource$. In other words, we assume that there are two data generating processes; one which generates development data ($\train$ and $\valid$, where $\valid$ is used for validation) according to $\psource$, and one which generates deployment data ($\test$) according to $\ptarget$.
% In particular, we will assume that $\test$ is such that all combinations of generative attributes are equally likely.
%the quantity that we care about is $\mathrm{Acc}_{\yvar|\xvar}(f;\ptarget)$ and we seek to maximize it while having access only to data from $\psource$. The case of group robustness assumes a particular relationship between $\psource$ and $\ptarget$.

% We assume that there are two data generating processes (\Cref{fig:problem-setup}); one which generates development data ($\train$ and $\valid$) and one which generates deployment data ($\test$). In particular, we will assume that $\test$ is such that all combinations of generative attributes are equally likely.

% Let $\yvar$ be the target generative attribute, which we try to predict from $\xvar$

In further detail, we focus on a particular transfer learning problem from $\psource$ to $\ptarget$, which is due to a distribution shift in attributes which participate in the generative process of the input variable $\xvar$. Our study considers anti-causal prediction tasks for which the target variable $\yvar$ is one of the generative attributes of $\xvar$, and $\bvar \subset \bspace$ another (possibly unobserved) categorical generative attribute with $|\bspace| = L$ classes. $\bvar$ is marginally independent to $\yvar$ under $\ptarget$, but it might not be under $\psource$. For this reason, we say that the variables $\yvar$ and $\bvar$ are \textit{spuriously correlated} in training and validation data, and $\bvar$ will also be referred to as the \textit{bias attribute}.
Under this setting, a group $(y, z) \in [K] \times [L]$ is defined as a combination of target and bias attribute values, and group robustness can be formulated as a transfer learning problem from $\psource$ to $\ptarget$ under the following two assumptions relating the two joint distributions over $\xvar$, $\yvar$ and $\bvar$:
\begin{align}
    &\ptarget(\yobs,\bobs) \propto \ind_{\Supp \psource(\yvar) \times \Supp \psource(\bvar)}(\yobs,\bobs), \label{eq:uniform-latents}\\ 
    &\text{and } \ptarget(\xobs|\yobs,\bobs) = \psource(\xobs|\yobs,\bobs), \label{eq:invar-mechanism}
\end{align}
where $\ind_S(s) = 1$ if $s \in S$ else $0$, the characteristic function of a set $S$, and $\Supp p$ denotes the support set of a distribution $p$. Relation~\ref{eq:uniform-latents} asserts that, during test time, the target and bias variables are distributed uniformly over the product of their respective marginal supports under $\psource$. In other words, all combinations of observed attributes values are considered equally. This also implies that $\tvar$ and $\bvar$ are marginally independent under $\ptarget$\footnote{Generally, the variables, which are uniform over a support which factorizes into a cartesian product of their marginal supports, are independent: $c \ind_{\Xspace \times \yspace}(x, y) = c \ind(x, y \in \Xspace \times \yspace) = c \ind(x \in \Xspace)\ind(y \in \yspace)$.}. On the other hand, relation~\ref{eq:invar-mechanism} assumes the invariance of mechanism, which is an assumption typically found in \textit{disentangled causal process} literature \citep{Suter2019DisentangledCausalModel}. These two assumptions underlie the group robustness literature, in that they are equivalent to the evaluation of classifiers under a popular robust performance criterion which is described below.

\textbf{Performance Criteria.\,} Let $\est{\tobs}(\xobs; f) \coloneqq \argmax_{\tobs} f(\xobs)_{\tobs}$ be the predictions of a scoring function $f: \Xspace \to \R^K$. The accuracy of $f$ under $\ptarget$ corresponds to the \textit{group-balanced accuracy}
\begin{equation}
    \mathrm{Acc}_{\yvar|\xvar}(f; \ptarget) = 
    % \E_{\xobs,\tobs \sim \ptarget} \ind\big(y = \argmax_{\tobs'} f(\xobs)_{\tobs'}\big) =  
    % &&\E_{(\tobs,\bobs) \sim \ptarget} \E_{\xobs | \tobs,\bobs \sim \ptarget}\ind\big(y = \argmax_{\tobs'} f(\xobs)_{\tobs'}\big) &\\
    \E_{\substack{\tobs,\bobs \sim \ptarget \\ \xobs \sim \ptarget(\cdot|\tobs,\bobs)}} \ind\big(y = \est{\tobs}(\xobs; f)\big)
    = \frac{1}{KL} \sum_{\yobs,\bobs} \E_{\xobs \sim \psource(\cdot|\tobs,\bobs)} \ind\big(\tobs = \est{\tobs}(\xobs; f)\big), \label{eq:acc-target}
\end{equation}
which is a frequently used performance metric in literature \citep{Sagawa2020GroupDRO,liu2021just,liu2022avoiding}. Essentially, this performance criterion first separately computes the accuracy for samples $x$ of each group $(y,z)$ and then averages them. The averaging operation per individual group accuracy directly stems from the uniformity assumption in $\ptarget$ (rel.~\ref{eq:uniform-latents}) and implements a group fairness notion: we care equally about performing well in all groups. Another popular group robustness criterion is the \textit{worst-group accuracy} \citep{Sagawa2020GroupDRO} which substitutes the average accuracy over individual groups with the minimum (worst) accuracy.

As described in~\Cref{sec/related}, there exist different approaches to improve group fairness~\citep{Sagawa2020GroupDRO,liu2021just,liu2022avoiding} that depend on knowing biases for cross-validation. Here we focus on logit adjustment techniques like the one proposed by~\citet{liu2022avoiding}, which co-train two models where one corrects the biases of the other. In the following sections, we show how, by decoupling the training of the biased model from the bias-corrected model, it is possible to obtain a proxy criterion that can be used for cross-validation without knowing the bias attribute.

% Directly estimating either of them from data requires access to bias labels $\bobs$, which as we argued it is not always a realistic demand.

\textbf{Logit Adjustment\,} was originally developed as a technique for supervised learning under class-imbalanced~\citep{Provost1996Adjust,Menon2013Consistency} or long-tailed~\citep{Menon2021Consistency} data. %\citet{liu2022avoiding} were the first to adapt this method for learning under spuriously correlated data without access to bias annotations during training time. 
%In this paper, we take a step back and assume 
For clarity of presentation, we will assume for the moment that we have access to bias labels. 
%for better understanding. 
We describe \textsc{sLA}, a bias-supervised training process with logit adjustment. Let $h_\theta: \Xspace \to \R^{|\yspace|}$ be the model that we want to train for group robustness, such as a neural network parameterized by $\theta$ implementing a function from samples in $\Xspace$ to the unnormalized logit space of $\yvar$. We then train by minimizing the average cross-entropy loss for the following \textit{logit adjusted model} 
\begin{equation}
   p_\theta(\yobs|\xobs,\zobs) \propto \exp(h_\theta(\xobs)_\yobs + \log \estpsource(\yobs|\zobs)). \label{eq:model-assumption}
   % &\min_\theta \frac{1}{N} \sum_{n=1}^N - \log p_\theta(\yobs_n|\xobs_n,\zobs_n), \label{eq:sla-ce}
\end{equation}
We estimate the conditional $\estpsource(\yobs|\bobs)$ directly from the available training data, for example via empirical frequencies of the finite number of groups $(\yobs,\zobs) \in [K]\times[L]$. Finally, during inference, we cancel out the contribution of the logit bias term $\estpsource(y|z)$ and predict only according to the optimized neural network $h_{\theta^*}$.

Note that, since $\yvar$ is a categorical variable, $\log \estpsource(\yobs|\bobs)$ takes non-positive values. For this reason, we can intuitively interpret logit adjustment as a soft masking operation for outputs of $h_\theta$ which are unlikely in the training data when we have observed $\bobs$. In this way, we account for the dependency of $\yvar$ to $\zvar$ which spuriously exists in the training distribution. By fitting the cross-entropy objective under logit adjustment, the network $h_\theta$ has to model the remaining relations for $\yvar\,|\,\xvar$ that are not spurious since those are already accounted for. An expressive enough model class in \cref{eq:model-assumption} can achieve this, assuming further that the likelihood ratio $\frac{\psource(\xobs|\yobs,\zobs)}{\psource(\xobs|\zobs)} = \frac{\psource(\yobs|\xobs,\zobs)}{\psource(\yobs|\zobs)}$ is independent of $\zobs$. In appendix~\ref{app:bias-super-training}, we derive \textsc{sLA} from first principles, bridging the gap between the well-studied application of this technique to the class-imbalance problem with its application to group robustness. Moreover, we demonstrate that \textsc{sLA} is a well-justified procedure by proving the following proposition.
\begin{proposition}[\textsc{sLA} optimizes the group-balanced accuracy]
Under the assumption that the hypothesis class $p_\theta(\yobs|\xobs,\zobs)$ (eq.~\ref{eq:model-assumption}) contains $\psource(\yobs|\xobs,\zobs)$, the minimizer network $h_{\theta^*}$ of the cross-entropy loss maximizes a lower bound to the group-balanced accuracy $\mathrm{Acc}_{\yvar|\xvar}(h_{\theta^*}; \ptarget)$. 
\label{propo:sla_opts_gba}
\end{proposition}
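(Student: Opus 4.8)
The plan is to show that the population minimizer of the logit-adjusted cross-entropy recovers the training conditional $\psource(\yobs|\xobs,\bobs)$, that the resulting inference rule coincides with the Bayes-optimal classifier under $\ptarget$, and that this classifier maximizes the target accuracy, which in turn equals the group-balanced accuracy. First I would write the average cross-entropy of the model in \cref{eq:model-assumption} as $\E_{\xobs,\bobs\sim\psource}\big[\KL\big(\psource(\cdot|\xobs,\bobs)\,\|\,p_\theta(\cdot|\xobs,\bobs)\big)\big]$ plus a $\theta$-independent conditional-entropy term. Since the KL divergence is nonnegative and vanishes exactly when the conditionals agree, the realizability hypothesis forces every minimizer to satisfy $p_{\theta^*}(\yobs|\xobs,\bobs)=\psource(\yobs|\xobs,\bobs)$ $\psource$-almost surely.

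Next I would invert the logit-adjusted softmax. Substituting $\estpsource(\yobs|\bobs)=\psource(\yobs|\bobs)$ into \cref{eq:model-assumption}, equating with $\psource(\yobs|\xobs,\bobs)$, and applying Bayes' rule gives, up to a $\yobs$-independent constant,
\[
\exp\big(h_{\theta^*}(\xobs)_\yobs\big)\;\propto\;\frac{\psource(\yobs|\xobs,\bobs)}{\psource(\yobs|\bobs)}\;=\;\frac{\psource(\xobs|\yobs,\bobs)}{\psource(\xobs|\bobs)}.
\]
The assumption that this likelihood ratio is independent of $\bobs$ is exactly what makes the right-hand side, and hence the inference-time scores $h_{\theta^*}(\xobs)$, a well-defined function of $\xobs$ alone.

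I would then match this rule to the target Bayes classifier. Writing $r(\xobs,\yobs):=\psource(\xobs|\yobs,\bobs)/\psource(\xobs|\bobs)$ for the $\bobs$-independent ratio of the previous step, so that $\psource(\xobs|\yobs,\bobs)=r(\xobs,\yobs)\,\psource(\xobs|\bobs)$, and using that $\ptarget(\yvar)$ is uniform on $\Supp\psource(\yvar)$ together with the invariant mechanism (rel.~\ref{eq:uniform-latents}--\ref{eq:invar-mechanism}), I would compute
\[
\ptarget(\yobs|\xobs)\;\propto\;\ptarget(\xobs|\yobs)\;=\;\tfrac{1}{L}\sum_{\bobs}\psource(\xobs|\yobs,\bobs)\;=\;r(\xobs,\yobs)\,\tfrac{1}{L}\sum_{\bobs}\psource(\xobs|\bobs),
\]
whose last factor is $\yobs$-independent. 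Thus $\ptarget(\yobs|\xobs)$ is proportional, as a function of $\yobs$, to $r(\xobs,\yobs)\propto\exp(h_{\theta^*}(\xobs)_\yobs)$, so $\est{\tobs}(\xobs;h_{\theta^*})=\argmax_{\yobs}\ptarget(\yobs|\xobs)$ is the Bayes-optimal decision rule for $\ptarget$. Since rel.~\ref{eq:uniform-latents}--\ref{eq:invar-mechanism} reduce \cref{eq:acc-target} to the ordinary target accuracy $\E_{(\xobs,\yobs)\sim\ptarget}\,\ind(\yobs=\est{\tobs}(\xobs;f))$, this rule maximizes the group-balanced accuracy.

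Finally, I would reconcile the exact optimality just derived with the weaker ``lower bound'' phrasing of the statement. The quantity actually optimized during training is the differentiable cross-entropy, not the $0$--$1$ group-balanced accuracy, so the rigorous link runs through the standard surrogate relationship: the suitably scaled cross-entropy upper-bounds the $0$--$1$ loss at the argmax prediction, hence its minimization maximizes a corresponding lower bound on accuracy. The main obstacle, and the place I would spend the most care, is keeping the two distributions separate --- the loss is minimized under $\psource$ while accuracy is evaluated under $\ptarget$ --- and verifying that it is precisely the $\bobs$-independence of the likelihood ratio that lets the logit-adjusted minimizer transport optimality from training to test. That structural assumption, rather than the KL or Bayes manipulations, is the crux of the argument.
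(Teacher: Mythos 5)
Your argument is essentially correct, but it takes a genuinely different route from the paper's and in fact proves a stronger conclusion. The paper never argues Bayes-optimality under $\ptarget$ for the marginal $\yvar$-classifier. Instead, it works entirely with the \emph{multi-label} class-balanced accuracy $\mathrm{Acc}_{\yvar,\bvar|\xvar}(\est{f}_{\yvar,\bvar|\xvar};\ptarget)$: under realizability the cross-entropy minimizer recovers $\psource(\yobs|\xobs,\bobs)$, the induced joint scorer $\estptarget(\xobs|\yobs,\bobs)$ is Bayes-optimal for that multi-label criterion (citing the class-imbalance literature), and the model class forces the factorization $\estptarget(\xobs|\yobs,\bobs)=\exp(h_{\theta^*}(\xobs)_\yobs)\,C(\bobs,\xobs)$, so that the joint $\argmax$ splits into independent maximizations over $\yobs$ and $\bobs$. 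The inequality $\mathbb{P}(A\cap B)\le\mathbb{P}(A)$ applied groupwise then yields $\mathrm{Acc}_{\yvar,\bvar|\xvar}\le\mathrm{Acc}_{\yvar|\xvar}$; the ``lower bound'' in the proposition is precisely this maximized joint accuracy, and no $\bobs$-independence of the likelihood ratio is invoked. Your route instead inverts the logit-adjusted softmax, uses the $\bobs$-independence of $\psource(\xobs|\yobs,\bobs)/\psource(\xobs|\bobs)$ (which, you should note, is already entailed up to a $\yobs$-independent factor by the realizability hypothesis together with the form of eq.~\ref{eq:model-assumption}, so it need not be imported as an extra assumption), and marginalizes over the uniform $\ptarget(\bobs)$ to identify $\argmax_\yobs h_{\theta^*}(\xobs)_\yobs$ with the Bayes rule under $\ptarget$. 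This buys exact maximization of the group-balanced accuracy rather than of a lower bound, at the cost of leaning harder on the structural assumption; the paper's argument is more conservative and survives even when one only knows that the joint scorer is optimal for the multi-label criterion. Both proofs share the same idealizations (population-level $\estpsource(\yobs|\bobs)$, support issues when $\psource(\yobs,\bobs)=0$).

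One genuine misstep: your final paragraph misidentifies the source of the ``lower bound'' phrasing. It has nothing to do with cross-entropy being a surrogate for the $0$--$1$ loss (a bound that would in any case live under $\psource$, not $\ptarget$); it refers to $\mathrm{Acc}_{\yvar,\bvar|\xvar}(\est{f}_{\yvar,\bvar|\xvar};\ptarget)\le\mathrm{Acc}_{\yvar|\xvar}(\est{f}_{\yvar|\xvar};\ptarget)$. Since your main argument already establishes exact optimality, that paragraph is not load-bearing and can simply be dropped, but as written it attributes the statement's weakening to the wrong mechanism.
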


\textbf{Systematic Generalization.\,} We expect \textsc{sLA} models to systematically generalize, as they optimize for the transfer learning problem defined by \cref{eq:uniform-latents,eq:invar-mechanism}. As \cref{fig:mpi3d-splits} suggests, systematic generalization is just an extreme case of such problems, for which samples from some combinations have $\psource(\yobs, \zobs) = 0$ during training, however they populate the test set with $\ptarget(\yobs, \zobs) > 0$. As a counterexample, the setting by \citet{ahmed2021systematic} does not correspond to a systematic generalization task, as we define it. In what they refer to as systematic splits of colored \textsc{MNIST}, all possible combinations of color and digit are exposed to the model during its training, while in our \textsc{sMPI3D} some color and shape combinations are only revealed during test-time. This detail makes our setting significantly more challenging, as the conditions for applying importance sampling are not met, as $\Supp \psource(\yvar, \zvar) \not \supseteq \Supp \ptarget(\yvar, \zvar)$~\citep[Chapter~5]{asmussen2007stochastic}. This means that a simple re-weighting of the per-sample loss with $\frac{1}{\psource(\yobs, \zobs)}$, in order to estimate $\E_{\ptarget} [l_\text{ce}(y, h_\theta(x))]$, is not appropriate for the task.

\textbf{Self-Supervised Learning (SSL)\,} refers to a collection of methods \citep{balestriero2023cookbook} in unsupervised representation learning where unlabeled data provides the supervision by defining tasks where a model is asked to predict some part of the input from other parts, for example by contrasting independently transformed versions of data samples~\citep{chen2020simple,he2020momentum,chen2020mocov2}. Methods like SimCLR~\citep{chen2020simple}, MoCo \citep{he2020momentum}, BYOL~\citep{grill2020bootstrap} and Barlow Twins~\citep{barlowtwins} provide solutions for low-data generalization, robustness, as well as transferability of learnt representations for image classification.
% Moreover, SSL methods have also achieved impressive results in natural language processing with models like BERT~\citep{devlin2018bert} using techniques such as masked language modeling and next sentence prediction.
Although self-supervised learning methods are generally more robust to distribution shifts than purely supervised methods \citep{liu2021selfsupervised}, they can still be affected by significant shifts, for example in long-tailed data learning \citep{assran2023the,shi2023how}. In this work, we explore the limits and utility of SSL in group robust classification. An extended preliminary discussion about self-supervised algorithms can be found in \cref{app:ssl_algorithms}.
\section{\textsc{uLA}: Bias-unsupervised Logit Adjustment}
\label{sec:logit-adjustment}

Here, we introduce bias-unsupervised logit adjustment (\textsc{uLA}), a logit correction  approach which improves on the work of \citet{liu2022avoiding} by \emph{removing dependency on explicit bias annotations both during training and validation}. For this reason, we create a proxy variable for the bias via the predictions of a pretrained network using SSL. The advantage of pretraining, over co-training a bias network~\citep{nam2020lfl,Chu2021DFA,liu2022avoiding}, is two-fold: First, we can reuse the fixed bias proxy network to define a validation criterion for group robustness. This enables us to perform hyperparameter search, but also informs us about when to stop training the debiased model, which is critical for optimal performance~\citep{liu2021just}. Second, recent literature \citep{izmailov2022features,kirichenko2023last} has demonstrated, in the case of training with bias-supervised data, that a linear classifier on top of pretrained base models provides with substantial group-robustness improvements. Here, we initialize the debiased model from the pretrained base model and finetune it using logit adjustment. \Cref{fig:approach} provides a summary of our approach and its pseudocode can be found in~\Cref{app:uLA}. Further training details are discussed in the following.

\subsection{Bias-unsupervised Training}
\label{subsec:bias-unsuper-training}

\textbf{Biased network: pretrain with SSL.\,}
We start by training a \textit{base model}, $f_\mathrm{base}$, using an SSL method on the unlabeled data of the training set $\train$. We decide on the hyperparameters of the SSL algorithm (learning rate, weight decay, temperature, augmentations and potentially others) by maximizing the i.i.d. validation accuracy of an online linear classifier, which probes the  representations of the base model. Afterwards, we train a linear classifier $g_\phi$ on top of a frozen $f_\mathrm{base}$ and against target variable labels $\yvar$ using a vanilla cross-entropy loss, in order to derive a proxy for the bias variable. Finally, we retrieve the composite neural network $h_\mathrm{bias} = g_\phi \circ f_\mathrm{base}$, and use its predictions $\est{\yobs}(\xobs; h_\mathrm{bias})$ as a proxy for the missing bias variable observations. For the same purpose, \citet{nam2020lfl} trained a bias-extracting network by employing Generalized Cross Entropy~\citep{generalized_cross_entropy}, an objective which provides robustness to noisy labels. Since the goal of the proxy network is to predict the spurious attribute, bias-conflicting or minority samples can be perceived as mislabeled data points. Here, we follow these observations and leverage the representations learnt with SSL which, when composed with a low-capacity (linear) classifier, provide with a model that is more robust to label-noise~\citep{xue2022investigating}. In addition to deriving a bias proxy, we will also use $f_\mathrm{base}$ as an initialization point in the parameter space for training the debiased model.

% The advantage of pretraining a biased network is that we can repurpose it to derive a bias-unsupervised validation score for group robustness, which the aforementioned works have not addressed. 

In this paper, we have chosen to use \textsc{MoCoV2+}~\citep{chen2020mocov2} as the SSL algorithm for the image classification tasks we consider. We make this choice since contrastive learning algorithms, like the \textsc{MoCo} family \citep{he2020momentum,chen2020mocov2}, offer relatively stable training since they explicitly prevent representation collapse in their loss function (see \Cref{app:ssl_algorithms}). However, as we demonstrate in \cref{subsec:analysis}, our method is not restrained to the use of a particular SSL algorithm.
% As we will show in \Cref{subsec:analysis}, training a linear classifier on top of a fixed pretrained backbone yields better hyperparameter search results compared to end-to-end supervised pretraining for a bias proxy in terms of quality of hyperparameter search 
% However, we could use any other SSL algorithm. 
% What effectively matters is that we have access to some pretrained feature extractor, that we will leverage in order to extract a bias variable proxy.

\textbf{Debiased network: logit adjustment.} 
In the absence of bias labels during training, we need a substitute for the estimate $\estpsource(\yvar,\zvar)$ in eq.~\ref{eq:model-assumption}. We use the predictions $\yobs_\mathrm{bias}$ of the bias proxy network $h_\mathrm{bias}$ to that end.
% random variable $Y_\mathrm{bias}$ induced by the training set on the predictions of $h_\mathrm{bias}$, instead of $\zvar$.
The resulting joint distribution between the target variable and the biased network's predictions, $\estpsource(\yobs,\yobs_\mathrm{bias})$, can be thought of as a soft confusion matrix of $h_\mathrm{bias}$ and can be computed using the available training data with
\begin{equation}
    \estpsource(\yobs, \yobs_\mathrm{bias}) = \frac{1}{|\train|} \sum_{\xobs',\yobs' \in \train} p_\mathrm{bias}(\yobs_\mathrm{bias} \,|\, \xobs') \ind(\yobs = \yobs') \label{eq:biased-confusion},
\end{equation}
where $p_\mathrm{bias}(\yobs_\mathrm{bias} \,|\, \xobs) \propto \exp(h_\mathrm{bias}(\xobs)_{\yobs_\mathrm{bias}} / \tau)$ is the biased model conditional. Note that $p_\mathrm{bias}(\yobs_\mathrm{bias} \,|\, \xobs)$ is post-hoc calibrated by a temperature hyperparameter $\tau$. As we rely on the biased network to approximate the spurious correlation structure described in $\psource(\yvar,\bvar)$, it is crucial that the predicted conditional probabilities of the biased network are calibrated correctly~\citep{Guo2017Calibration,muller2019does,Menon2013Consistency}.
% Literature has shown that modern neural networks result in poorly
% calibrated model conditionals \citep{Guo2017Calibration}.
% It has been argued that this effect can be mitigated by the use of label smoothing \citep{muller2019does},
% as well as the post-hoc calibrating temperature \citep{Menon2021Consistency}.
Afterwards, we are ready to begin training a debiased network with logit adjustment as in \Cref{sec:preliminaries}; using only the predictions of $h_\mathrm{bias}$ as the bias variable. The debiased network is initialized at the composition of a random linear classifier with the pretrained network $f_\mathrm{base}$, and during training we finetune it while adjusting its output logits by
\begin{equation}
    p_\theta(\yobs\,|\,\xobs) \propto \exp\Big(h_\theta(\xobs)_\yobs + \eta \,\log \estpsource\big(\yobs\,|\,\est{\yobs}(\xobs;h_\mathrm{bias}\big)\Big), \label{eq:unsuper-bias-la}
\end{equation}
where $\eta \geq 0$ is a hyperparameter controlling the strength of the additive logit bias. Notice that for $\eta = 0$ we fall back to ERM training. By tuning $\eta$ we can mitigate calibration errors of the debiased model~\citep{Menon2013Consistency}, similar to what $\tau$ does for the bias proxy. Selected hyperparameter configurations can be found in Appendix~\ref{app:experiment_details}.
\subsection{Bias-unsupervised Validation}
\label{subsec:bias-unsuper-validation}

% B. We use a variation of JtT to create a validation objective.
% We stratify accuracies according to groups of (y, yhat)
We re-purpose the pretrained biased classifier $h_\mathrm{bias}$ so that training no longer requires bias-annotated validation data for model selection. Our bias-unsupervised validation criterion calculates a balanced accuracy across pairs $(y, y_\mathrm{bias}) \in [K] \times [K]$ of true labels and biased classifier predictions. In practice, we compute
\begin{align}
    &\reallywidehat{\mathrm{BalAcc}}(f; h_\mathrm{bias}) \coloneqq \frac{1}{K^2} \sum_{y,y_\mathrm{bias}} \frac{1}{|S_{y,y_\mathrm{bias}}|} \sum_{x_i,y_i \in S_{y,y_\mathrm{bias}}} \ind\big(y_i = \est{y}(x_i; f)\big) \label{eq:unsuper-valid}  \\
    &S_{y, y_\mathrm{bias}} \coloneqq \left\{(x_i,y_i) \in \valid\,|\, y_i = y \,\text{and}\, \est{y}(x_i; h_\mathrm{bias}) = y_\mathrm{bias}  \right\} \nonumber,
\end{align} 
where $S_{y, y_\mathrm{bias}}$ are partitions of $\valid$ based on the value of predictions of $h_\mathrm{bias}$ on a sample $x_i$ and its ground-truth target label $y_i$. This corresponds to a form of group-balanced accuracy. Alternatively, we could also calculate a form of worst-group accuracy by taking the minimum across $S_{y, y_\mathrm{bias}}$. We find that worst-case validation is more suitable for tasks with small number of classes $K$.
% Essentially, the proposed validation criterion reweights the per-sample $0-1$ loss using the frequency of the $S_{y,y'}$ groups.
% akin to what \citet[JtT]{liu2021just} propose for the training loss.

% C. Pitfalls of this approach
During training, we evaluate models at every epoch and we select the one that maximizes our validation score across the duration of a training trial. In addition, we use this criterion to tune hyperparameters. In particular, for each task we tune learning rate, weight decay, logit adjustment strength coefficient $\eta$, calibration temperature $\tau$ and, in addition, the number of pretraining steps for the SSL backbone - whenever it is applicable - and for the linear classification probe of the bias proxy network. Fig.~\ref{fig:approach} depicts our bias-unsupervised training and validation procedures.

% \todo{Why are we able to tune training steps vs supervised pretraining? Mention Bansal paper on small generalization gap of SSL + small capacity linear probe to fit zero-out training loss}

% The reason is that if we did, one way to easily maximize the criterion would be 
% to set those hyperparameters so that the biased network fitted the training dataset exactly;
% thus rendering it useless for the bias-unsupervised logit adjustment algorithm.

\section{Experiments}
\label{sec/experiments}

\begin{table*}[t]
    \centering
    \resizebox{\linewidth}{!}{
    \begin{tabular}{lccccccccccc}
    \toprule
    &\multicolumn{2}{c}{Bias Labels} & \multicolumn{4}{c}{\textsc{cMNIST}} & \multicolumn{4}{c}{\textsc{cCIFAR10}} \\
    \cmidrule(r{4pt}){2-3} \cmidrule(l){4-7} \cmidrule(l){8-11} \cmidrule(l){12-12}
    & Train & Val & $0.5\%$ & $1.0\%$ & $2.0\%$ & $5.0\%$ & $0.5\%$ & $1.0\%$ & $2.0\%$ & $5.0\%$ \\ 
    \midrule
    \citep{Sagawa2020GroupDRO} \textsc{GroupDRO}$^\dagger$ & \cmark & \cmark & $63.12$ & $68.78$ & $76.30$ & $84.20$ & $33.44$ & $38.30$ & $45.81$ & $57.32$ \\
    \arrayrulecolor{black!30}\midrule
    % \textsc{sLA} (ours)& \cmark & \cmark  & $\bm{76.34}$\tiny{$\pm1.52$} & $83.83$\tiny{$\pm1.83$} & $89.84$\tiny{$\pm0.87$} & $94.16$\tiny{$\pm0.10$} & $29.01$\tiny{$\pm0.28$} & $37.12$\tiny{$\pm0.94$} & $55.78$\tiny{$\pm1.98$} & $68.17$\tiny{$\pm1.14$} \\
    % \arrayrulecolor{black!30}\midrule
    \citep{nam2020lfl}~\textsc{LfF}$^\ast$& \xmark & \cmark  & $52.50$\tiny{$\pm2.43$} & $61.89$\tiny{$\pm4.97$} & $71.03$\tiny{$\pm2.44$} & $80.57$\tiny{$\pm3.84$} & $28.57$\tiny{$\pm1.30$} & $33.07$\tiny{$\pm0.77$} & $39.91$\tiny{$\pm0.30$} & $50.27$\tiny{$\pm1.56$} \\
    \citep{Chu2021DFA}~\textsc{DFA}$^\ast$ & \xmark & \cmark  & $65.22$\tiny{$\pm4.41$} & $81.73$\tiny{$\pm2.34$} & $84.79$\tiny{$\pm0.95$} & $89.66$\tiny{$\pm1.09$} & $29.95$\tiny{$\pm0.71$} & $36.49$\tiny{$\pm1.79$} & $41.78$\tiny{$\pm2.29$} & $51.13$\tiny{$\pm1.28$} \\
    \citep{liu2022avoiding}~\textsc{LC}$^\dagger$ & \xmark & \cmark          & $71.25$\tiny{$\pm3.17$} & $\bm{82.25}$\tiny{$\pm2.11$} & $\bm{86.21}$\tiny{$\pm1.02$} & $91.16$\tiny{$\pm0.97$} & $\bm{34.56}$\tiny{$\pm0.69$} & $37.34$\tiny{$\pm0.69$} & $47.81$\tiny{$\pm2.00$} & $54.55$\tiny{$\pm1.26$} \\  
    % \textsc{uLA} (super pre)& \xmark & \cmark   & $73.38$\tiny{$\pm2.95$} & $82.92$\tiny{$\pm0.60$} & $86.67$\tiny{$\pm1.27$} & $88.35$\tiny{$\pm0.52$} & $27.69$\tiny{$\pm0.56$} & $37.55$\tiny{$\pm2.33$} & $43.79$\tiny{$\pm1.60$} & $57.54$\tiny{$\pm1.63$} \\
    % \textsc{uLA} (ours) & \xmark & \cmark  & $\bm{73.60}$\tiny{$\pm1.26$} & $81.49$\tiny{$\pm3.27$} & $\bm{87.58}$\tiny{$\pm0.84$} & $\bm{93.15}$\tiny{$\pm0.16$} & $\bm{48.24}$\tiny{$\pm4.18$} & $\bm{62.04}$\tiny{$\pm0.72$} & $\bm{68.86}$\tiny{$\pm0.99$} & $\bm{74.37}$\tiny{$\pm0.66$} \\
    \arrayrulecolor{black!30}\midrule
    \textsc{ERM}$^\ast$ & \xmark & \xmark &$35.19$\tiny{$\pm3.49$} & $52.09$\tiny{$\pm2.88$} & $65.86$\tiny{$\pm3.59$} & $82.17$\tiny{$\pm0.74$} & $23.08$\tiny{$\pm1.25$} & $25.82$\tiny{$\pm0.33$} & $30.06$\tiny{$\pm0.71$} & $39.42$\tiny{$\pm0.64$} \\
    \textsc{uLA} (ours) & \xmark & \xmark & $\bm{75.13}$\tiny{$\pm0.78$} & $81.80$\tiny{$\pm1.41$} & $84.79$\tiny{$\pm1.10$} & $\bm{92.79}$\tiny{$\pm0.85$} & $34.39$\tiny{$\pm1.14$} & $\bm{62.49}$\tiny{$\pm0.74$} & $\bm{63.88}$\tiny{$\pm1.07$} & $\bm{74.49}$\tiny{$\pm0.58$} \\
    \arrayrulecolor{black}\bottomrule    
    \end{tabular}
    }
    \caption{Results using datasets from \citet{Chu2021DFA} for various \% of bias-conflicting examples in the training set. We report avg. group-balanced test accuracy (\%) and std. dev. over 5 seeds.$~^\ast$Results from \citet{Chu2021DFA}.$~^\dagger$Results from \citet{liu2022avoiding}.}
    \label{tab:results-dfa}
\end{table*}

\begin{table}[t]
    \centering
    \small{
    \begin{tabular}{lcccccc}
    \toprule
    &\multicolumn{2}{c}{Bias Labels} & \multicolumn{2}{c}{\textsc{Waterbirds}} & \multicolumn{2}{c}{\textsc{CelebA}} \\
    \cmidrule(r{4pt}){2-3} \cmidrule(r{4pt}){4-5} \cmidrule(r{4pt}){6-7}
    & Train & Val & i.i.d. & worst group & i.i.d. & worst group \\ 
    \midrule
    \citep{Sagawa2020GroupDRO}~\textsc{GroupDRO}$^\ast$ & \cmark & \cmark & $93.5$ & $91.4$ & $92.9$ & $88.9$ \\
    % \textsc{sLA} (ours) & \cmark & \cmark & $81.2$/$81.9$/$82.4$ & $75.0$/$78.3$/$80.5$ & ? & $92.2$ \\
    \arrayrulecolor{black!30}\midrule
    \citep{bardenhagen2021boosting} \textsc{ERM} & \xmark & \cmark & $97.6$ & $86.7$ & $93.1$ & $77.8$ \\
    \citep{nam2020lfl}~\textsc{LfF}$^\ast$ & \xmark & \cmark & $97.5$ & $75.2$ & $86.0$ & $77.2$ \\
    \citep{liu2021just}~\textsc{JtT}$^\ast$ & \xmark & \cmark & $93.6$ & $86.0$ & $88.0$ & $81.1$  \\
    \citep{liu2022avoiding}~\textsc{LC} & \xmark & \cmark & - & $90.5$\tiny{$\pm1.1$} & - & $88.1$\tiny{$\pm0.8$} \\
    % \textsc{uLA} (ssl pre) & \xmark & \cmark & $80.6$/$81.3$/$81.8$ & $70.4$/$71.9$/$73.8$ & $92.9$\tiny{$\pm0.3$} & $88.8$\tiny{$\pm2.5$} \\
    % \textsc{uLA} (ours) & \xmark & \cmark & $?$ & $?$ & $92.9$\tiny{$\pm0.3$} & $88.8$\tiny{$\pm2.5$} & $?$ & $?$ \\
    \arrayrulecolor{black!30}\midrule
    \textsc{ERM} & \xmark & \xmark & $97.3$ & $72.6$ & $95.6$ & $47.2$ \\
    \citet{bardenhagen2021boosting} & \xmark & \xmark & $97.5$ & $78.5$ & $88.0$ & $78.9$  \\ 
    \citep{asgari2022masktune} \textsc{MaskTune} & \xmark & \xmark & $93.0$\tiny{$\pm0.7$} & $86.4$\tiny{$\pm1.9$} & $91.3$\tiny{$\pm0.1$} & $78.0$\tiny{$\pm1.2$}  \\
    \textsc{uLA} (ours) & \xmark & \xmark & $91.5$\tiny{$\pm0.7$} & $86.1$\tiny{$\pm1.5$} & $93.9$\tiny{$\pm0.2$} & $86.5$\tiny{$\pm3.7$}  \\
    \arrayrulecolor{black}\bottomrule    
    \end{tabular}
    }
    \caption{Results on \textsc{Waterbirds} and \textsc{CelebA}. We report avg. test accuracy (\%) and std. dev. over 5 seeds.$~^\ast$Results from \citet{liu2021just}.}
    \label{tab:results-celeba}
\end{table}

\textbf{Datasets.\,} The tasks we consider are all specific instances of the setup above (see Fig.~\ref{fig:problem-instances} and \Cref{sec:preliminaries}). This spans group robustness challenges like colored MNIST~\citep[\textsc{cMNIST}]{nam2020lfl}, corrupted CIFAR10~\citep[\textsc{cCIFAR10}]{Hendrycks2019cc10} and \textsc{Waterbirds} \citep{sagawa2020investigation}, fair classification benchmarks like \textsc{CelebA} \citep{liu2015deep}, and systematic generalization tasks such as the contributed \textsc{sMPI3D}. Details about their construction can be found in \cref{app:datasets}. 

\textbf{Training Setup.\,} For \textsc{cMNIST}, we train a 3-hidden layer MLP, while we use a ResNet18~\citep{he2016deep} for \textsc{cCIFAR10} and \textsc{sMPI3D}, and a ResNet50 for \textsc{Waterbirds} and \textsc{CelebA}. For all datasets except \textsc{Waterbirds}, we pretrain the base model with the \textsc{MoCoV2+} \citep{chen2020mocov2} process, while training of the linear probe for the bias network and finetuning for the logit adjusted debiased network happen with AdamW \citep{loshchilov2017decoupled} optimizer. For \textsc{Waterbirds} instead, we leverage a base model which was pretrained on Imagenet \citep{russakovsky2015imagenet}, following baselines in the literature for fair comparison, and we finetune it using SGD. Finally, for \textsc{cMNIST}, \textsc{cCIFAR10} and \textsc{sMPI3D} we use our group-balanced bias-unsupervised validation criterion, whereas for \textsc{Waterbirds} and \textsc{CelebA} the worst-group version. Further details are described in \Cref{app:experiment_details}.

\textbf{Baselines.\,} We compare \textsc{uLA} with vanilla ERM and a diverse set of group robustness techniques described in \Cref{sec/related}. \textsc{GroupDRO}~\citep{Sagawa2020GroupDRO} provides with a fully bias-supervised baseline, while \textsc{LfF}~\citep{nam2020lfl}, \textsc{JtT}~\citep{liu2021just}, \textsc{LC}~\citep{liu2022avoiding} and \textsc{DFA}~\citep{Chu2021DFA} are bias-unsupervised during training although they require bias annotations during validation to achieve robust optimal performance. We also consider two fully bias-unsupervised methods: \citet{bardenhagen2021boosting} propose early stopping networks to derive a proxy for bias-unsupervised validation, and \textsc{MaskTune}~\citep{asgari2022masktune} which provide competitive results under fully bias-unsupervised benchmarks without performing any validation procedure.% for group robustness. 

% \textsc{RUBi}~\citep{cadene2019rubi}, and .
\subsection{Results on Benchmarks}
\label{subsec:benchmarks}

In \cref{tab:results-dfa}, we report the group-balanced accuracy on \textsc{cMNIST} and \textsc{cCIFAR10}, across different percentages of bias-conflicting examples in the training set. For \textsc{cMNIST}, we observe that our method performs overall competitively against \textsc{LfF}, \textsc{DFA} and \textsc{LC}, even though these baselines use bias annotations during model selection. On the other hand, for \textsc{cCIFAR10}, we observe a significantly improved group robust performance for 3/4 difficulty levels. The highest difference is observed at the 1.0\% task, where our method outperforms \textsc{GroupDRO} by about 24\% absolute increase in group-balance accuracy.

In \cref{tab:results-celeba}, we observe worst-group accuracy results in the more challenging \textsc{Waterbirds} and \textsc{CelebA} datasets. In both cases, our approach performs again competitively among bias-unsupervised training algorithms that leverage bias information during validation, falling slightly behind \textsc{LC}, which is the best out of the ones considered. Notably, our approach is still the best performing fully bias-unsupervised method, outperforming the supervised learning pretraining validation scheme of \citet{bardenhagen2021boosting}, and performing on par with \textsc{MaskTune} on \textsc{Waterbirds} and better than it on \textsc{CelebA} by $\approx$8\% absolute worst-group accuracy. 

% Table-1
\subsection{Systematic Generalization}
\label{subsec:smpi3d_results}

\begin{table}[t]
    \centering
    \small{
    \begin{tabular}{lccccc}
    \toprule
    & Bias Labels & \multicolumn{4}{c}{\textsc{sMPI3D} (various $C$ colors / shape)}\\
    \cmidrule(r{4pt}){2-2} \cmidrule(r{4pt}){3-6} 
    & Train \& Val & $2$ & $3$ & $4$ & $5$ \\ 
    \midrule
    \% i.i.d. samples & -  & $33.33$ & $50.00$ & $66.66$ & $83.33$ \\
    \midrule
    \textsc{\citep{Sagawa2020GroupDRO}~GroupDRO} & \cmark & $31.23$\tiny{$\pm1.88$} & $46.01$\tiny{$\pm4.13$} & $69.68$\tiny{$\pm7.82$} & $82.18$\tiny{$\pm8.39$} \\
    % \textsc{sLA} (ours) & \cmark & $65.81$\tiny{$\pm9.18$} & $93.78$\tiny{$\pm2.16$} & $98.32$\tiny{$\pm0.49$} & $99.35$\tiny{$\pm0.11$} \\
    \arrayrulecolor{black!30}\midrule
    \textsc{ERM} & \xmark &  $31.94$\tiny{$\pm1.67$} & $47.68$\tiny{$\pm4.06$} & $71.94$\tiny{$\pm7.71$} & $83.10$\tiny{$\pm8.64$} \\
    % \textsc{\citep{cadene2019rubi}~RUBi} & \xmark & $32.36$\tiny{$\pm0.82$} & $50.22$\tiny{$\pm2.05$} & $69.56$\tiny{$\pm3.09$} & $84.26$\tiny{$\pm3.43$} \\
    \textsc{\citep{liu2021just}~JtT} & \xmark & $31.89$\tiny{$\pm0.88$} & $48.93$\tiny{$\pm2.04$} & $67.78$\tiny{$\pm3.14$} & $83.31$\tiny{$\pm3.51$} \\
    % \textsc{DFA} & $?$ & $?$ & $?$ & $?$ \\
    \textsc{\citep{liu2022avoiding}~LC}$^\ast$ & \xmark & $31.29$\tiny{$\pm0.96$} & $45.01$\tiny{$\pm2.31$} & $61.67$\tiny{$\pm5.06$} & $94.62$\tiny{$\pm0.88$} \\
    % \textsc{uLA} (super pre) & $36.84$\tiny{$\pm2.90$} & $67.20$\tiny{$\pm5.47$} & $87.22$\tiny{$\pm4.57$} & $97.34$\tiny{$\pm0.52$} \\
    \textsc{uLA} (ours) & \xmark & $\bm{59.58}$\tiny{$\pm3.12$} & $\bm{80.53}$\tiny{$\pm3.16$} & $\bm{91.11}$\tiny{$\pm3.52$} & $\bm{98.05}$\tiny{$\pm0.64$} \\
    \arrayrulecolor{black}\bottomrule    
    \end{tabular}
    }
    \caption{Results on \textsc{sMPI3D} for various numbers of $C$ colors per shape value in the training set (see~\cref{app:datasets}). We report avg. group-balanced accuracy (\%) and std. dev. over 5 seeds \& dataset generations.$~^\ast$GroupMix augmentation was not used for fairness of comparison.}
    \label{tab:results-mpi3d}
\end{table}

\textsc{sMPI3D} is our contributed task which we use to study combinatorial systematicity in classifiers. With this benchmark, we aim to study the ability of classifiers to generalize to samples generated from novel combinations of observed generative attributes values, which under $\psource$ have $0$ probability. The target task is to classify the \textit{shape} of an object, which is spuriously correlated with its \textit{color}. We devise 4 difficulty levels for this task which we denote by $C$, the number of color values present in the training set per shape. In Fig.~\ref{fig:mpi3d-splits}, we display a possible split between in-distribution and o.o.d. combinations of attributes for $C = 4$. Details about its construction are given in \Cref{app:datasets}.

\textbf{Results.\,} In \cref{tab:results-mpi3d}, we further validate our approach on our contributed systematic generalization task. Under this setting, all bias-unsupervised approaches are evaluated fairly since their models are validated with exactly the same data resources; access to bias labels cannot give model selection advantage to any algorithm since there are no o.o.d. samples in the validation (just like in the training) split. Our method is the only one which consistently offers group-balanced accuracy improvements across difficulty levels, demonstrating generalization to o.o.d. samples. On the other hand, \textsc{GroupDRO}, vanilla \textsc{ERM}, \textsc{JtT} and \textsc{LC} are not able to increase group-balanced accuracy over the percentage of i.i.d. samples present in the balanced test set. 

% Results show that logit adjustment methods are particularly successful in generalizing robustly to held-out combinations of features. In fact, they consist the only set of methods among the ones compared to perform better than the percentage of i.i.d. samples in the test set. Compared to \textsc{GroupDRO}~\citep{Sagawa2020GroupDRO} or \textsc{JtT}~\citep{liu2021just} which rely on reweighting loss contributions of existing samples in the training set, \textsc{sLA} and \textsc{uLA} softly mask off logits that should not be adapted during training. Our strategy seems to be a better fit for this extreme type of distribution shift, as it seems to mitigate the need for larger numbers of bias-conflicting examples in the training dataset.

\subsection{Ablation Studies}
\label{subsec:analysis}

\begin{figure}[t]
    \centering
    \includegraphics[width=\linewidth]{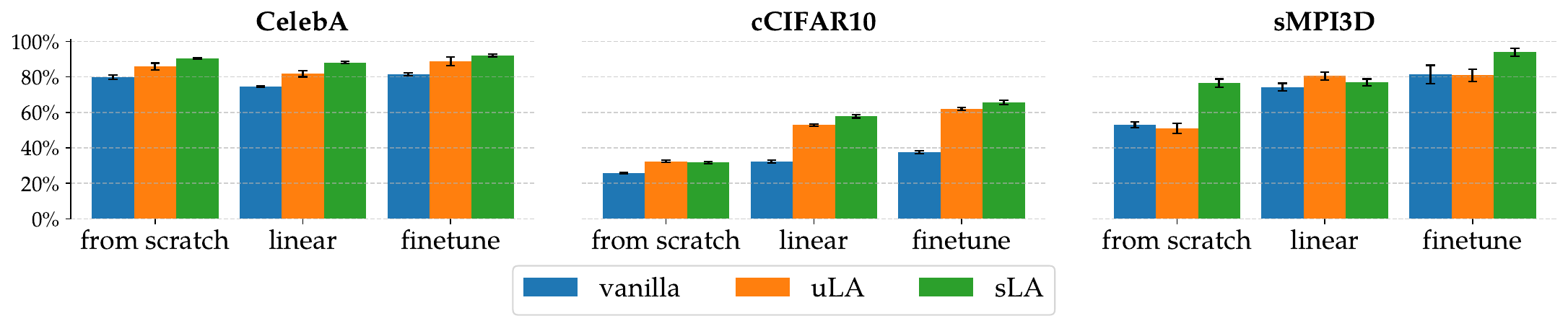}
    \caption{Ablations on the influence of SSL pretraining and on the use of \textsc{Vanilla} process, \textsc{sLA}, or \textsc{uLA} for training against the downstream task. For \textsc{cCIFAR10} (1\%) and \textsc{sMPI3D} ($C=3$) we report the group-balanced test accuracy, whereas for \textsc{CelebA} the worst-group. We report avg. accuracies and std. dev. over 5 seeds.}
    \label{fig:ablation_sLA_and_train_mode}
\end{figure}

\begin{table}[t]
    \centering
    \small{
    \begin{tabular}{lccc}
        \toprule
        & Group-balanced Test Acc. \\
        \midrule
        \textsc{ERM} & $25.82$\tiny{$\pm0.33$} \\
        \textsc{GroupDRO} & $38.30$ \\
        \midrule
        \textsc{uLA w. MoCoV2+} & $62.49$\tiny{$\pm0.74$} \\
        \textsc{uLA w. BYOL} & $59.73$\tiny{$\pm2.03$} \\
        \textsc{uLA w. BarlowTwins} & $50.08$\tiny{$\pm1.23$} \\
        \bottomrule
    \end{tabular}
    }
    \caption{Ablation of SSL pretraining methods for the backbone model. Default $\eta=1.0$ and $\tau=1.0$ are used. We pretrain for $1000$ epochs, and train the linear head of the bias proxy for $100$ epochs. We report avg. group-balanced test accuracy (\%) and std. dev. over 5 seeds on \textsc{cCIFAR10} (1\%).}
    \label{tab:other_ssl}
\end{table}

\begin{figure}[t]
  \begin{minipage}[b]{0.49\linewidth}
    \centering
    \includegraphics[width=\linewidth]{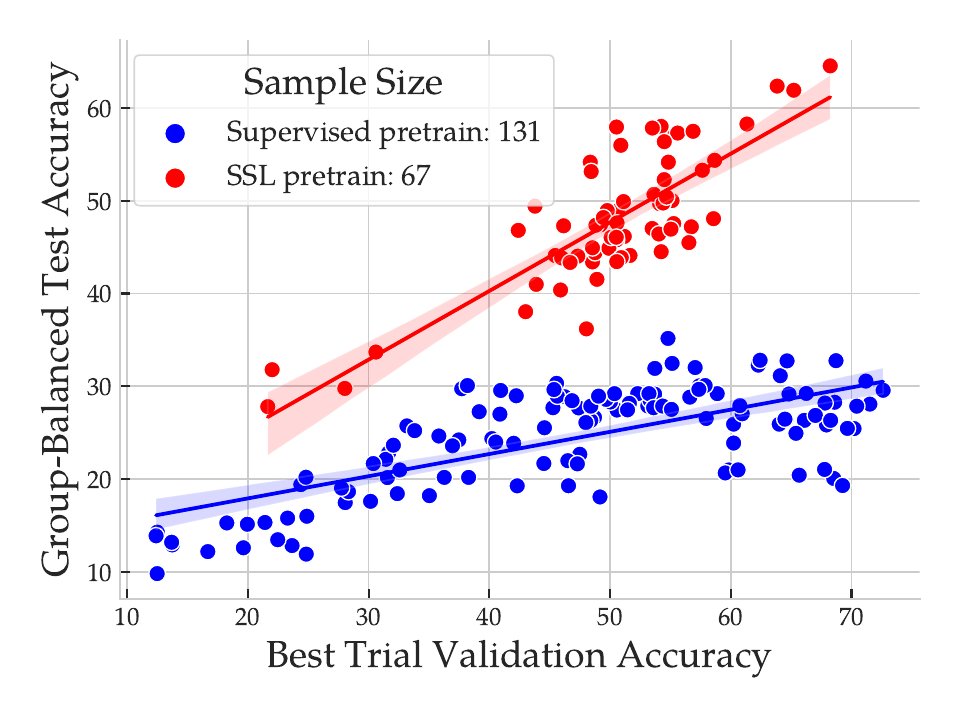}
    \captionof{figure}{Validation vs test on \textsc{cCIFAR10} (1\%).}
    \label{fig:search_scatter}
  \end{minipage}
  \hfill
  \begin{minipage}[b]{0.49\linewidth}
    \centering
    \small{
    \begin{tabular}{lccc}
    \toprule
    Pretraining method & Pearson correlation \\
    \midrule
    End-to-end supervised &  $0.690$ \tiny{($0.588$, $0.770$)} \\
    SSL + linear head &  $0.819$ \tiny{($0.721$, $0.885$)} \\
    \arrayrulecolor{black}\bottomrule    
    \end{tabular}
    }
    \captionof{table}{Effect of pretraining method for a bias proxy network on results of hyperparameter search using our validation criterion. Each training trial's hyperparameters were sampled from the same prior. Pretraining with SSL results in trials whose best validation criterion correlates better with selected model's group-balanced test accuracy on \textsc{cCIFAR10} (1\%). In parenthesis, we compute 95\% confidence intervals.}
    \label{tab:ablation-pretrain}
  \end{minipage}
\end{figure}

We perform a set of studies to understand better the efficacy of our approach. In Fig.~\ref{fig:ablation_sLA_and_train_mode}, we ablate the choice of training paradigm and the influence of a pretrained base model using SSL to the robust performance of a trained model to the target task. The paradigms we choose are among vanilla cross-entropy minimization, \textsc{uLA} and \textsc{sLA} (bias-supervised logit adjustment - see \Cref{sec:preliminaries}). For this ablation, a bias-supervised validation procedure was used for comparison against the fully bias-supervised \textsc{sLA} baseline. We find that finetuning the pretrained base model gives the best performance across training paradigms and tasks. Second, forn \textsc{CelebA} and \textsc{cCIFAR10}, vanilla cross-entropy finetuning does not offer stark performance improvement over vanilla training from scratch. Only when we apply a logit adjustment training procedure, we are able to take significant advantage of the learnt representation space. At the same time, the accuracy gaps between \textsc{uLA} and \textsc{sLA} are small which indicates that we are able to recover correctly the bias attribute with the proxy network. On the other hand, for \textsc{sMPI3D} the gaps between \textsc{uLA} and \textsc{sLA} are large. This shows that, in the systematic generalization case, bias extraction remains an open challenge for future work, and that improvement over baseline procedures in \cref{tab:results-mpi3d} is due to pretraining with SSL.

In addition, we study the impact of the choice of SSL method that we use to pretrain the backbone. For this reason, we perform an ablation experiment on \textsc{cCIFAR10} (1\%) by changing the pretraining strategy from \textsc{MoCoV2+} to \textsc{BYOL}~\citep{grill2020bootstrap} or \textsc{Barlow Twins}~\citep{barlowtwins}. We find that, while \textsc{BYOL} performs within the error margin of the best \textsc{MoCoV2+} setting, \textsc{Barlow Twins} underperforms. \textsc{Barlow Twins} seeks to match the empirical (i.i.d.) cross-correlation between features to the identity. Arguably, we expect that the cross-correlation is different under the shifted test set. In any case, \textsc{uLA} significantly outperforms the non-\textsc{uLA} baselines with any of the considered SSL methods.

Finally, we study how the choice of pretraining paradigm for the bias network influences the quality of hyperparameter search using our proposed validation criterion. In Fig.~\ref{fig:search_scatter}, we present two separate searches on the same space of hyperparameters using two different pretraining approaches. In red we see our approach of pretraining an SSL base model for the bias network, and in blue we see a baseline approach where we pretrain with purely supervised learning. We observe that SSL pretraining enables stronger correlation between the proposed bias-unsupervised group-balanced validation criterion and the corresponding test accuracy on \textsc{cCIFAR10}. It is more difficult to tune hyperparameters with a bias network pretrained with supervised learning, because it may fit the training set entirely. In that case, the validation criterion collapses to the in-distribution test accuracy which is not indicative of the group-balanced test accuracy. On the contrary, classification with a linear probe on top of SSL representations prevents from fitting the training set entirely, having small generalization gaps in-distribution \citep{bansal2020self}. In this way, the validation criterion remains strongly correlated even in larger validation accuracy values, maintaining its utility in a greater range of hyperparameter configurations.
\section{Related Work}
\label{sec/related}

Prior literature can be grouped according to three main strategies that attempt to improve a model's robustness to dataset bias~\citep{Menon2021Consistency}. (i) \emph{Resampling} strategies increase or decrease the frequency of biased attributes in the input space~\citep{geirhos2018imagenet,minderer2020automatic} or latent space \citep{darlow2020latent, zhang2017range, Chu2021DFA}. (ii) \emph{Loss reweighting} methods balance class or feature importances during training~\citep{zhang2017range, cui2019class, cao2019learning, tan2020equalization, jamal2020rethinking}. Especially relevant to our work are the reweighting methods that improve robustness and generalization when training on biased datasets~\citep{Sagawa2020GroupDRO, nam2020lfl, Chu2021DFA, liu2021just}. (iii) \emph{Post-hoc adaptation} methods~\citep{Menon2021Consistency, Provost1996Adjust, Collell2016Adjust, kim2020adjusting, kang2019decoupling, asgari2022masktune, wei2023distributionally} correct the biases learned by already-trained models. Most relevant to our work is the logit adjustment technique proposed by \citet{Menon2021Consistency} for long-tail learning, which leverages class frequencies to rebalance the model predictions after training or to train with a loss function which is aware of the class prior. We develop a logit adjustment technique for the problem of learning from biased data, which does not require previous knowledge about the dataset's biases.

Several solutions have emerged for the group robustness problem, especially when bias attribute data is available. For example, \textsc{GroupDRO}~\citep{Sagawa2020GroupDRO} leverages explicit bias attributes to reduce worst-case group loss. Our method, however, reduces reliance on such data, recognizing that access to them can be impractical. Bias-unsupervised methods like Learning from Failure (\textsc{LfF})~\citep{nam2020lfl} and Disentangled Feature Augmentation (\textsc{DFA})~\citep{Chu2021DFA} reweight the loss of the unbiased model using a co-trained biased model, removing the need for bias supervision. Conversely, Just-train-twice (\textsc{JtT})~\citep{liu2021just} reweights misclassified samples from an initial biased training, emphasizing worst-performing group data. Similarly, \citet{he-etal-2019-unlearn} exemplify the same intuition in that a second network is trained on examples that cannot be predicted already using the spurious structure. \citet{liu2022avoiding} proposed a bias-unsupervised logit adjustment technique (\textsc{LC}) also based on co-training a biased network. Utilizing domain knowledge, \citet{clark-etal-2019-dont} describe a bias-supervised logit adjustment approach to debiasing Visual Question Answering (VQA) models by incorporating a bias proxy which is trained exclusively on question data. These methods only generalize to seen attribute groups during training and require bias knowledge during validation for optimal performance. We tackle these issues by employing SSL to pre-train a network, deriving a bias proxy for debiased model training and validation, as well as using it as initialization.

In order to derive bias-unsupervised solutions, literature has proposed to train reference models as proxies for the missing bias labels. \citet{pmlr-v139-creager21a} optimize a reference model for group assignments which maximally violate a relaxation of the Environment Invariance Criterion. \citet{chen2022when} seek to establish conditional independence between the predictions of the proxy model and the target variable given the inferred groups. Our work follows more closely the approach of \citep{liu2021just}, in which a biased network is simply trained with ERM. As we demonstrate at \cref{fig:training_curves} of \cref{app:extended_experiments}, by utilizing a frozen backbone pretrained with SSL, our approach improves on the sensitivity to the number of training steps for the bias proxy.

\citet{bardenhagen2021boosting} suggested a validation scheme dependent on early stopping of bias proxy network training. Our method, using SSL pretraining, avoids this by treating pretraining steps as tunable hyperparameters, maintaining performance of alternatives which were otherwise tuned with bias information. \citet{chen2022when} perform experiments using a methodology dubbed as Training Environments Validation (TEV). Similar to us, TEV validates models based on inferred groups from training, however the methodology is unfortunately not well documented in the literature, making its reproducibility difficult. As we show in the ablation study of \cref{fig:search_scatter}, implementation details can make a large difference in the quality of the criterion. Finally, \textsc{MaskTune}~\citep{asgari2022masktune} eliminates spurious shortcuts by masking input data during a secondary training phase. Despite its resilience in performance without using a bias-unsupervised o.o.d. model selection criterion, the need for a reliable validation strategy for group robustness remains.

\section{Conclusion}
\label{sec/conclusion}

We explored group robust classification in synthetic and real tasks, proposing a generalization task with unseen attribute combinations. Current robust classification methods struggle in this setting, motivating our SSL-based logit adjustment approach. Importantly, we introduce \textbf{a methodology for training and validating robust models without group labels}. Empirical evaluations on five datasets show our method outperforms existing fully bias-unsupervised approaches and rivals those using bias annotations during validation. In terms of \emph{limitations and broader impact} of our contributions, as machine learning systems handle high-stakes applications, ensuring robustness to underrepresented samples is crucial. Our work reduces reliance on known data biases, but existing benchmarks differ from real-life scenarios with unknown biased attribute combinations. To bridge this gap, we proposed a synthetic benchmark and encourage further research on real data, revealing more system limitations.

% Our experiments ascertain that logit adjustment methods can be applied to robust classification problems dealing with spurious correlations between data attributes during training. Specifically, we demonstrate that, by using our adaptations of the logit adjustment paradigm to the problem of interest,  we can improve robust generalization in synthetic and real-data tasks, regardless of whether we have access to the bias attribute labels or not.

% Something about systematic generalization

% We see logit adjustment as a family of methods which we seek to extend
% in the future 

% Furthermore, we have proposed a methodology based on a pretrained biased network
% for calculating validation criterion that does not depend on labeled information about the bias. This is an important step towards making robust classification
% methods more applicable to practice, where acquiring bias supervision signals may be impossible or expensive. We have studied its application to training-time
% model selection and we find that it is promising avenue to a
% important methodological issue in out-of-distribution literature.
% We recommend that future research should seek to further understand and develop the potential of pre-trained biased neural networks for training and validating robust classifiers.  

\bibliography{main}
\bibliographystyle{abbrvnat}

\newpage
\appendix
\appendix
\section*{Appendix}
\label{sec/appendix}

\section{Bias-Supervised Training with Logit Adjustment}
\label{app:bias-super-training}

The goal of this Appendix section is to develop a bias-supervised approach for maximizing the group-balanced accuracy (\Cref{eq:acc-target}), which is going to serve as an initial point in the development of our entirely bias-unsupervised methodology in \Cref{sec:logit-adjustment}. 

Under the problem formulation we introduced in \Cref{sec:preliminaries}, the metric that we would ultimately like to maximize given a scoring function $f_{\yvar|\xvar}: \Xspace \to \R^K$ is $\mathrm{Acc}_{\yvar|\xvar}(f_{\yvar|\xvar}; \ptarget)$, the top-1 accuracy under the test distribution $\ptarget$. Simple algebraic manipulations reveal that this test accuracy corresponds to the group-balanced accuracy:
\begin{align}
    \mathrm{Acc}_{\yvar|\xvar}(f_{\yvar|\xvar}; \ptarget) &\coloneqq
    \E_{\xobs,\yobs \sim \ptarget} \ind\big(y = \argmax_{\yobs'} f_{\yvar|\xvar}(\xobs)_{\yobs'}\big) \label{app:eq:acc-target}\\
    & = \E_{\substack{\tobs,\bobs \sim \ptarget \\ \xobs \sim \ptarget(\cdot|\tobs,\bobs)}} \ind\big(y = \argmax_{\yobs'} f_{\yvar|\xvar}(\xobs)_{\yobs'}\big) \nonumber\\
    & = \frac{1}{KL} \sum_{(\yobs,\bobs) \in [K] \times [L]} \E_{\xobs \sim \psource(\cdot|\tobs,\bobs)} \ind\big(\tobs = \argmax_{\yobs'} f_{\yvar|\xvar}(\xobs)_{\yobs'}\big). \nonumber 
\end{align}
A related quantity in the literature is the class-balanced top-1 accuracy, which is used in class-imbalanced classification problems \citep{Brodersen2010BayesOpt, Menon2013Consistency}. We can make the connection by imagining $(\yobs,\bobs)$ as a multi-label classification target in a multi-label, but class-imbalanced, problem. The multi-label class-balanced accuracy of a multi-label scoring function $f_{\yvar,\zvar|\xvar}: \Xspace \to \R^K \times \R^L$ would then be
\begin{equation}
    \mathrm{Acc}_{\yvar,\bvar|\xvar}(f_{\yvar,\bvar|\xvar}; \ptarget) = 
    \frac{1}{KL} \sum_{(\yobs,\bobs) \in [K] \times [L]} \E_{\xobs \sim \psource(\cdot|\tobs,\bobs)} \ind\big((\tobs, \bobs) = \argmax_{\tobs',\bobs'} f_{\yvar,\bvar|\xvar}(\xobs)_{\tobs',\bobs'}\big) \label{app:eq:multiacc}.
\end{equation}
We will set towards building a multi-label classifier which maximizes multi-label class-balanced accuracy and we will finally show that its part which only predicts $\yvar$ from $\xvar$ can be shown to maximize a lower bound to the group-balanced accuracy.

% In the following subsections, we will incrementally build a bias-unsupervised algorithm for our setting. We will start by developing a bias-supervised approach using  $\mathrm{Acc}_{\yvar,\bvar|\xvar}(f_{\yvar,\bvar|\xvar}; \ptarget)$ (\cref{app:eq:multiacc}) as a surrogate to maximize $\mathrm{Acc}_{\yvar|\xvar}(f_{\yvar|\xvar}; \ptarget)$ (\cref{app:eq:acc-target}). Then, we will remove the bias supervision during training by utilizing a proxy for the bias variable.

% A. Start with assumption. What do we have in our disposal?
% State Bayes-optimal classifier for multi-acc.
We assume that $\train$ and $\valid$ provide us with tuples of observations $(\xobs, \yobs, \bobs)$ distributed according to $\psource$. In this case, results in the class-imbalance literature \citep{Menon2013Consistency,Koyejo2014Consistency,Collell2016Adjust} state that the Bayes-optimal scoring function $f^*_{\yvar,\zvar|\xvar}$ for maximizing the class-balanced criterion~\eqref{app:eq:multiacc} is given by
\begin{equation}
    \argmax_{\yobs,\bobs} f^*_{\yvar,\zvar|\xvar}(\xobs)_{\yobs,\bobs} = \argmax_{\yobs,\bobs} \ptarget(\xobs\,|\,\yobs,\bobs) \label{app:eq:bayes-opt-multiacc-preds}.
\end{equation}
Note that under the invariance of mechanism assumption~\eqref{eq:invar-mechanism} we have that $\ptarget(\xobs\,|\,\yobs,\bobs) = \psource(\xobs\,|\,\yobs,\bobs)$. This allows us to compute the mechanism conditional probability in $\psource$ distribution terms
\begin{equation}
    \ptarget(\xobs\,|\,\yobs,\bobs) = \frac{\psource(\yobs,\bobs\,|\,\xobs)}{\psource(\yobs,\bobs)}\psource(\xobs) \label{app:eq:opt-multi-predictor}
\end{equation}
by utilizing the Bayes rule. This way we have a candidate strategy to maximize multi-label class-balanced accuracy in \cref{app:eq:multiacc}: First, we will estimate $\psource(\yobs,\bobs\,|\,\xobs)$ and $\psource(\yobs,\bobs)$ from data, and then we will divide the estimates. Notice that, in order to make a prediction about $(\yobs,\bobs)$ given $\xobs$, we do not need to model $\psource(\xobs)$ as this quantity is constant in $(\yobs,\bobs)$, and thus does not influence the $\argmax$.

\textbf{Training process.\,} We can directly estimate $\psource(\yobs,\bobs)$ from available training data by computing the empirical frequencies of pairs $(\yobs,\bobs) \in [L] \times [K]$, or by a Bayesian estimate with a Dirichlet prior. On the other hand, we can train a parameterized discriminative model $p_\theta$ with maximum conditional likelihood estimation to estimate $\psource(\yobs,\bobs|\xobs)$
\begin{equation}
    \max_\theta \frac{1}{|\train|} \sum_{\xobs,\yobs,\bobs \in \train} \log p_\theta(\yobs,\bobs\,|\,\xobs) \label{app:eq:mcle}
\end{equation}
which is equivalent to the minimization of a KL divergence estimate between the $\psource$ and $p_\theta$ model conditionals.

The method described so far would have sufficed if the goal was to perform multi-label classification, however we need to devise a way to overcome the bias-supervision, since we only care about classifying $\yvar$ without having access to $\bvar$ annotations. For this reason, we need to proceed to modelling assumptions about $p_\theta$, which will allow us to develop the bias-unsupervised method described in \Cref{sec:logit-adjustment}. First, we recall that $\psource$ factorizes as
\begin{equation}
    \psource(\yobs,\bobs\,|\,\xobs) = \psource(\yobs\,|\,\xobs, \bobs)\psource(\bobs\,|\,\xobs).
\end{equation}
Let $h_\theta: \Xspace \to \R^K$ be a parameterized model (such as a neural network) tasked to predict unnormalized logits of $\yvar$ given an observation $\xobs$. Then, we model $\psource(\yobs\,|\,\xobs,\bobs)$ by
\begin{equation}
    p_\theta(\yobs\,|\,\xobs, \bobs) \propto \exp\big(h_\theta(\xobs)_\yobs + \log \estpsource(\yobs\,|\,\bobs)\big) \label{app:eq:model-assumption}.
\end{equation}
 % where $Z_\theta(\bobs, \xobs)$ is the normalization constant of $\yvar\,|\,\bobs,\xobs$, in other words the denominator of softmax.
Unpacking our modelling assumption \cref{app:eq:model-assumption}, we have the following: First, we are going to compute $\estpsource(\yobs\,|\,\bobs) = \frac{\estpsource(\yobs,\bobs)}{\estpsource(\bobs)}$, by using our estimated $\estpsource(\yobs,\bobs)$ and by marginalizing out $\yobs$ for $\estpsource(\bobs) = \sum_{\yobs \in [K]} \estpsource(\yobs, \bobs)$. Given a tuple of observables $(\xobs, \yobs, \bobs)$ in a training batch, we use $\log \estpsource(\yobs\,|\,\bobs)$ to adjust additively the outputs of our parametric discriminative model $h_\theta$ to the observed input $\xobs$. Notice that since $\yvar$ is a categorical variable, $\log \estpsource(\yobs\,|\,\bobs) \in (-\infty, 0]$ for all $\yobs \in [K]$. For this reason, we can intuitively interpret \textit{logit adjustment} as a soft masking operation for outputs of $h_\theta$ which are unlikely in the training data when we have observed $\bobs$. In this way, we account for the dependency of $\yvar$ to $\zvar$ which spuriously exists in the training distribution. By fitting the cross-entropy objective under logit adjustment, the network $h_\theta$ has to model the remaining relations for $\yvar\,|\,\xvar$ that are not spurious since those are already accounted for.

% In practice, we are going to introduce a multiplicative hyperparameter
% for the strength of logit adjustment $\eta \approx 1.0$
% as a form of capacity control for spurious relations.
% \begin{equation}
%     p_\theta(\yobs\,|\,\bobs,\xobs; \eta) \propto \exp\big(h_\theta(\xobs)_\yobs + \eta \,\log \estpsource(\yobs\,|\,\bobs)\big). \label{app:eq:model-assumption-with-eta}
% \end{equation}

\textbf{Prediction process.\,} Our modelling assumption enables the prediction of $\yobs$ without computing the maximizer over combinations of $(\yobs, \bobs) \in [K] \times [L]$. This is an important step towards developping a bias-unsupervised training procedure. This is because assumption \ref{app:eq:model-assumption} allows us to compute the maximizer for $\yobs\,|\,\xobs$ and $\zobs\,|\,\xobs$ separately. We show that under our modelling assumption, if we are only interested in predicting only the target variable $\yobs$ given an observation $\xobs$, we do not need to model $\zobs\,|\,\xobs$ at all. To understand how this is possible, we take a close look at the total expression for optimal predictor. After fitting the model $p_\theta$ using \cref{app:eq:mcle}, we calculate the estimated scoring function for the multi-label class-imbalanced problem as in \cref{app:eq:opt-multi-predictor}
\begin{align}
    &\est{f}_{\yvar,\zvar|\xvar}(\xobs) = \estptarget(\xobs\,|\,\yobs,\bobs) = \frac{\estpsource(\yobs,\bobs\,|\,\xobs)}{\estpsource(\yobs,\bobs)}\,\estpsource(\xobs) \label{app:eq:multiacc-model}\\
    &= \frac{p_\theta(\yobs\,|\,\xobs,\bobs)}{\estpsource(\yobs\,|\,\bobs)}\,
    \frac{\estpsource(\bobs\,|\,\xobs)}{\estpsource(\zobs)}\,\estpsource(\xobs) \nonumber\\
    &= \frac{\exp\big(h_\theta(\xobs)_\yobs + \stkout{\log \estpsource(\yobs\,|\,\bobs)}\big)}{Z_\theta(\zobs,\xobs)\,\stkout{\estpsource(\yobs\,|\,\bobs)}} \,\frac{\estpsource(\bobs\,|\,\xobs)}{\estpsource(\zobs)}\,\estpsource(\xobs) \nonumber\\
    &= \exp\big(h_\theta(\xobs)_\yobs\big) \,\frac{\estpsource(\zobs|\xobs)\estpsource(\xobs)}{Z_\theta(\zobs,\xobs)\estpsource(\zobs)}, \nonumber
\end{align}
where $Z_\theta(\zobs,\xobs) \coloneqq \sum_{y \in [K]} \exp\big(h_\theta(\xobs)_\yobs + \log \estpsource(\yobs\,|\,\bobs)\big)$ is the partition function of $p_\theta(\yobs|\xobs,\zobs)$. Notice that the expression simplifies into a multiplication of two terms; the first, $\exp\big(h_\theta(x)_y\big)$, solely depends on $(\xobs, \yobs)$, while the second fraction only depends on $(\xobs, \zobs)$.
Then, predicting according to \cref{app:eq:bayes-opt-multiacc-preds} amounts to computing
\begin{equation}
  \max_{\yobs,\bobs} \estptarget(\xobs\,|\,\yobs,\bobs) =\max_{\yobs}\exp\big(h_\theta(\xobs)_\yobs\big) \, \max_{\zobs} \frac{\estpsource(\zobs|\xobs)\estpsource(\xobs)}{Z_\theta(\zobs,\xobs)\estpsource(\zobs)}.
\end{equation}
Essentially, information about $\zvar$ is only need during training time and not during prediction of $\yvar$ given $\xvar$. In practice, during prediction we adapt the model by removing the logit adjustment term in order to acquire the unbiased network for the task. Consequently, the obtained scoring function for debiased $\yvar$ predictions is just the trained neural network
\begin{equation}
\est{f}_{\yvar|\xvar}(\xobs; \theta) \coloneqq h_\theta(\xobs).
\end{equation}

Notice that our design decisions are well-justified, in the sense that, under our class of models (eq.~\ref{app:eq:model-assumption}), we have $\mathrm{Acc}_{\yvar,\zvar|\xvar}(\est{f}_{\yvar,\zvar|\xvar}; \ptarget) \leq \mathrm{Acc}_{\yvar|\xvar}(\est{f}_{\yvar|\xvar}; \ptarget)$. This means that by maximizing the multi-label class-balanced accuracy we indirectly maximize the top-1 accuracy of the candidate scoring function $\est{f}_{\yvar|\xvar}$
under the test distribution $\ptarget$, which is the group-balanced accuracy as we have demonstrated. 

% \section{Proof That $\mathrm{Acc}_{Y,Z|X}(f_{Y,Z|X}; \ptarget) \leq \mathrm{Acc}_{Y|X}(f_{Y|X}; \ptarget)$ Under Our Model Class}
% \label{app:inequality-proof}

\begin{proposition}[\textsc{sLA} optimizes the group-balanced accuracy]
Under the assumption that the hypothesis class of $p_\theta(\yobs|\xobs,\zobs)$ (eq.~\ref{eq:model-assumption}) contains $\psource(\yobs|\xobs,\zobs)$, the minimizer network $h_{\theta^*}$ of the cross-entropy loss maximizes a lower bound to the group-balanced accuracy   
$\mathrm{Acc}_{\yvar|\xvar}(h_{\theta^*}; \ptarget)$. 
\end{proposition}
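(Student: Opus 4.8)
The plan is to assemble the chain of (in)equalities developed above into a single argument. Recall that the group-balanced accuracy coincides with the top-1 accuracy under $\ptarget$ (\cref{app:eq:acc-target}), and that we have introduced the auxiliary multi-label class-balanced accuracy $\mathrm{Acc}_{\yvar,\bvar|\xvar}$ (\cref{app:eq:multiacc}) as a surrogate. The three steps I would carry out are: (i) show that minimizing the cross-entropy loss recovers the true conditional $\psource(\yobs|\xobs,\bobs)$; (ii) deduce that the induced multi-label scoring function $\est{f}_{\yvar,\bvar|\xvar}$ is Bayes-optimal for $\mathrm{Acc}_{\yvar,\bvar|\xvar}$; and (iii) bound $\mathrm{Acc}_{\yvar,\bvar|\xvar}$ from above by the group-balanced accuracy $\mathrm{Acc}_{\yvar|\xvar}(h_{\theta^*};\ptarget)$, so that maximizing the former amounts to maximizing a lower bound on the latter.

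For step (i), I would use that the maximum-conditional-likelihood objective \cref{app:eq:mcle} equals, up to a constant, the $\psource$-averaged divergence $\E_{\xobs\sim\psource}\KL\big(\psource(\cdot|\xobs)\,\|\,p_\theta(\cdot|\xobs)\big)$. Since the realizability assumption places $\psource(\yobs|\xobs,\bobs)$ inside the hypothesis class \cref{eq:model-assumption}, this divergence attains its global minimum of zero, forcing $p_{\theta^*}(\yobs|\xobs,\bobs)=\psource(\yobs|\xobs,\bobs)$ on the support of $\psource$. Substituting this identity into the factorized expression for the estimated optimal predictor \cref{app:eq:multiacc-model} then shows that $\est{f}_{\yvar,\bvar|\xvar}(\xobs)$ is proportional, in $(\yobs,\bobs)$ for each fixed $\xobs$, to $\ptarget(\xobs|\yobs,\bobs)$; hence its $\argmax$ agrees with the Bayes-optimal rule \cref{app:eq:bayes-opt-multiacc-preds}, which gives step (ii): $\est{f}_{\yvar,\bvar|\xvar}$ achieves the maximal value of $\mathrm{Acc}_{\yvar,\bvar|\xvar}$.

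Step (iii) is where the core of the argument lies, and I expect it to be the main obstacle. The key structural fact is that \cref{app:eq:multiacc-model} factorizes the joint score into a term depending only on $(\xobs,\yobs)$ and a term depending only on $(\xobs,\bobs)$; consequently the joint maximizer over $(\yobs,\bobs)$ decouples coordinate-wise, and its $\yvar$-component is exactly $\est{\yobs}(\xobs;h_{\theta^*})=\argmax_{\yobs}h_{\theta^*}(\xobs)_{\yobs}$. Therefore, for every sample, the event that the joint prediction is correct is contained in the event that the $\yvar$-prediction alone is correct, giving the pointwise indicator inequality $\ind\big((\yobs,\bobs)=\widehat{(\yobs,\bobs)}\big)\le\ind\big(\yobs=\est{\yobs}(\xobs;h_{\theta^*})\big)$. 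Taking $\E_{\xobs\sim\psource(\cdot|\yobs,\bobs)}$ group-by-group and averaging over the $KL$ groups — the averaging structure is identical in \cref{app:eq:multiacc,app:eq:acc-target} — yields $\mathrm{Acc}_{\yvar,\bvar|\xvar}(\est{f}_{\yvar,\bvar|\xvar};\ptarget)\le\mathrm{Acc}_{\yvar|\xvar}(h_{\theta^*};\ptarget)$, completing the chain.

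The subtlety I would be most careful about is the well-definedness of step (i): for $p_{\theta^*}(\yobs|\xobs,\bobs)=\psource(\yobs|\xobs,\bobs)$ to hold with $h_{\theta^*}$ a function of $\xobs$ alone, the realizability hypothesis must implicitly encode the invariance of the likelihood ratio $\psource(\yobs|\xobs,\bobs)/\psource(\yobs|\bobs)$ across $\bobs$ (the condition flagged after \cref{eq:model-assumption}); otherwise no single $h_\theta(\xobs)$ can match the logit-adjusted target for all $\bobs$. I would also state honestly that what is proved is maximization of the surrogate $\mathrm{Acc}_{\yvar,\bvar|\xvar}$, which lower-bounds but does not equal $\mathrm{Acc}_{\yvar|\xvar}(h_{\theta^*};\ptarget)$, so the conclusion is precisely \emph{maximizes a lower bound}, matching the statement.
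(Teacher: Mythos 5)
Your proposal is correct and follows essentially the same route as the paper's own proof: the decisive step in both is the factorization of the joint score into an $(\xobs,\yobs)$-term and an $(\xobs,\bobs)$-term, the resulting coordinate-wise decoupling of the $\argmax$, the bound $\mathbb{P}(A\cap B)\le\mathbb{P}(A)$ applied per group, and the average over the $KL$ groups. Your steps (i) and (ii) simply make explicit the realizability and Bayes-optimality reasoning that the paper carries out in the text preceding the proposition rather than inside the proof itself, and your caveat about the likelihood-ratio invariance matches the condition the paper flags after \cref{eq:model-assumption}.
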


\begin{proof}
We will start by the quantity
\begin{equation}
 \mathrm{Acc}_{Y,Z|X}\big(f; \ptarget(\cdot|y,z)\big) \coloneqq \Ptarget_{\xobs|\yobs,\bobs}\big((\tobs, \bobs) = \argmax_{\tobs',\bobs'} f(\xobs)_{\tobs',\bobs'}\big)    
\end{equation}
the classification accuracy conditioned on the pair $(\yobs,\bobs)$. Recall that under our modelling assumption \cref{app:eq:model-assumption}, the estimated Bayes-optimal scoring function for the multi-label class-imbalanced classification problem is
\begin{equation}
    \est{f}_{\yvar,\zvar|\xvar}(\xobs) = \estptarget(\xobs\,|\,\yobs,\bobs) = \exp\big(h_{\theta^*}(\xobs)_\yobs\big) C(\zobs, \xobs),
\end{equation}
where $C(\zobs,\xobs) \coloneqq \frac{\estpsource(\zobs|\xobs)\estpsource(\xobs)}{Z_\theta(\zobs,\xobs)\estpsource(\zobs)}$ is a constant in $\yobs$.
We then substitute to get
\begin{equation}
    \mathrm{Acc}_{Y,Z|X}\big(\est{f}_{\yvar,\zvar|\xvar}; \ptarget(\cdot|y,z)\big) = \Ptarget_{\xobs|\yobs,\bobs}\Big((\tobs, \bobs) = \argmax_{\tobs',\bobs'} \exp\big(h_{\theta^*}(\xobs)_{\yobs'}\big) C(\zobs', \xobs)\Big).
\end{equation}
However $\max_{\zobs'}$ gets pushed to the innermost term since $\exp\big(h_{\theta^*}(\xobs)_{\yobs'}\big)$ is independent of $\zobs'$, as well as $\max_{\zobs'} C(\zobs', \xobs)$ is independent of $\yobs'$. This means that we can take the $\argmax$ of individual expression independently
\begin{align}
    &\mathrm{Acc}_{Y,Z|X}\big(\est{f}_{\yvar,\zvar|\xvar}; \ptarget(\cdot|y,z)\big) =\\ &\Ptarget_{\xobs|\yobs,\bobs}\big(\{\tobs = \argmax_{\tobs'} \exp\big(h_{\theta^*}(\xobs)_{\yobs'}\big)\} \cap \{\bobs = \argmax_{\bobs'} C_2(\zobs', \xobs)\}\big),\nonumber
\end{align}
and since $\mathbb{P}(A \cap B) \leq \mathbb{P}(A)$ and $\exp$ is a strictly increasing function, we get
\begin{equation}
    \mathrm{Acc}_{Y,Z|X}\big(\est{f}_{\yvar,\zvar|\xvar}; \ptarget(\cdot|y,z)\big) \leq \Ptarget_{\xobs|\yobs,\bobs}\big(\tobs = \argmax_{\tobs'} h_{\theta^*}(\xobs)_{\yobs'}\big).
\end{equation}
We recognise the scoring function $\est{f}_{\yvar|\xvar}(\xobs) \coloneqq h_{\theta^*}(\xobs)$ on the right hand side of the expression. Finally, by summing up all inequalities for all $(\yobs,\zobs) \in [K]\times[L]$, and dividing by $KL$, we get
\begin{align}
    &\frac{1}{KL} \sum_{\yobs,\bobs} \mathrm{Acc}_{Y,Z|X}\big(\est{f}_{\yvar,\zvar|\xvar}; \ptarget(\cdot|y,z)\big) \leq \frac{1}{KL} \sum_{\yobs,\bobs} \Ptarget_{\xobs|\yobs,\bobs}\big(\tobs = \argmax_{\tobs'} \est{f}_{\yvar|\xvar}(\xobs)_{\tobs'}\big) \\
    &\mathrm{Acc}_{\yvar,\bvar|\xvar}(\est{f}_{\yvar,\zvar|\xvar}; \ptarget) \leq 
    \mathrm{Acc}_{\yvar|\xvar}(\est{f}_{\yvar|\xvar}; \ptarget).
\end{align}
This shows that the scoring function $\est{f}_{\yvar|\xvar}(\xobs) \coloneqq h_{\theta^*}(\xobs)$ maximizes a lower bound to the group-balanced accuracy.
\end{proof}

\section{Self-supervised Learning Algorithms}
\label{app:ssl_algorithms}
\begin{table}[ht]
    \centering
    \small{
    \begin{tabular}{lcccc}
    \toprule
                        &  \multicolumn{2}{c}{Target Attribute} & \multicolumn{2}{c}{Bias Attribute} \\
                        \cmidrule(r{4pt}){2-3} \cmidrule(r{4pt}){4-5}
                        & $\psource$ & $\ptarget$ & $\psource$ & $\ptarget$ \\
      \midrule
      \textsc{cMNIST} (1\%) & $98.86$ & $27.01$ & $99.88$ & $96.04$ \\ 
      \textsc{cCIFAR10} (1\%) & $98.93$ & $36.53$ & $99.60$ & $82.05$ \\ 
      \textsc{sMPI3D} ($C=3$) & $98.19$ & $66.08$ & $99.91$ & $87.22$ \\ 
      \textsc{CelebA}$^\ast$ & $94.01$ & $75.44$ & $95.74$ & $90.60$ \\ 
    \bottomrule
    \end{tabular}
    }
    \caption{Downstream online linear classification against the \textit{target} and \textit{bias} attributes of each task considered (see \Cref{app:datasets}). We present top-1 (average) accuracy (\%) of the corresponding classifiers evaluated at last epoch of $\textsc{MoCoV2+}$ training, under samples from the i.i.d. validation set ($\psource$) and the o.o.d. test set ($\ptarget$).$~^\ast$For \textsc{CelebA}, since the test split is not controlled to be group-balanced, we simulate the accuracy under $\ptarget$ by utilizing the existing bias annotations to estimate a group-balanced test accuracy.} 
    \label{tab:ssl_online_classification}
\end{table}

In our work, we are using a contrastive learning algorithm~\citep{oord2018representation,chen2020simple,he2020momentum,chen2020mocov2} to pretrain a base model
for the bias proxy, as well as to initialize the debiased model. In contrastive learning, an instance discrimination pretext task is defined from unlabeled data. In particular, pairs of data points (positives), which are derived by independently augmenting the same observation (views), should have representations that are closer in distance than representations of other samples in the dataset (negatives). To achieve that we optimize a form of the InfoNCE loss~\citep{oord2018representation}
\begin{equation}
    \mathcal{L}(q, k^{+}, \{k^{-}_i\}_{i=1}^N) = - \log \frac{\exp(q^\top k^{+} / \tau)}{\exp(q^\top k^{+} / \tau) + \sum_{i=1}^N \exp(q^\top k^{-}_i / \tau)},
\end{equation}
where $(q, k^{+})$ is the positive pair and $\{k^{-}_i\}_{i=1}^N$ are the negatives. Typically, representations are projected to the unit hypersphere for training stability. In the \textsc{MoCoV2+} framework~\citep{chen2020mocov2}, gradients are backpropagated only through the query representation ($q$) to the encoding network, while the key representations ($k^{+}$ and $k^{-}$) are extracted by a network derived from the exponential moving average of its parameters. Finally, positive key representations of past batches are kept in a queue in memory to serve as negative key representations for the subsequent batches. 
% \clearpage
\section{uLA: Algorithm}
\label{app:uLA}

\DontPrintSemicolon
\begin{algorithm}[ht]
\caption{\textsc{uLA}: Logit adjustment without bias labels during training or model selection.}
\label{alg:ula}
\KwData{Training split: $\train \coloneqq \left\{x_m, y_m\right\}_{m=1}^M$}
\KwData{Validation split: $\valid \coloneqq \left\{x_n, y_n\right\}_{n=1}^N$}
\KwData{Hyperparameters: SSL checkpoint $T_\text{ssl}$, linear probe training steps $T_\text{stop}$, logit adjustment strength $\eta$ and calibration $\tau$, and optimization hyperparameters}
\KwResult{Debiased model $h_{\theta^*}$ and its \texttt{validation\_score}}
\;
\tcc{Load pretrained SSL encoder}
$f_\mathrm{base} \gets \mathrm{load}(T_\text{ssl})$\;
\;
\tcc{Train Linear Head}
Define a classifier $h_\mathrm{bias} = g_\phi \circ f_\mathrm{bias}$ from the composition of pretrained $f_\mathrm{base}$ and linear classifier $g_\phi$ with parameters $\phi$\;
Predict biased predictions using $\est{y}(x; h_\mathrm{bias}) = \argmax_{y} h_\mathrm{bias}(x)_y$\;
\For{$t=1$ \KwTo $T_\text{stop}$}{
    Sample batch $B$ from $\train$\;
    Update $\phi$ so that linear model on top of $h_\mathrm{bias}(x)$ minimizes average cross-entropy on $B$\;
}
\;

\tcc{Finetune with logit adjustment}
Define calibrated model of biased predictions $p_\mathrm{bias}(y_\mathrm{bias}|x) \propto \exp\big(h_\mathrm{bias}(x)_{y_\mathrm{bias}} / \tau\big)$\;
Compute confusion matrix $\estpsource(y, y_\mathrm{bias})$ \tcp{See \Cref{eq:biased-confusion}}
Define $h_\theta$ to be the debiased model\;
Initialize base model in $h_\theta$ from pretrained $f_\mathrm{base}$\;
Predict debiased predictions using $\est{y}(x; h_\theta) = \argmax_{y} h_\theta(x)_y$\;
\;
$\mathtt{checkpoints} \gets [\,\,]$\;
\For{$t=1$ \KwTo maximum number of iterations}{
    Sample batch $B$ from $\train$\;
    Update $\theta$ so that model $p_\theta(y|x) \propto \exp\Big(h_\theta(x)_y + \eta \log \estpsource\big(y | \est{y}(x; h_\mathrm{bias})\big)\Big)$ minimizes average cross-entropy on $B$\;
    $\mathtt{validation\_score} \gets \mathtt{compute\_validation\_criterion}(\valid, h_\mathrm{bias}, h_\theta)$ \tcp{see \Cref{eq:unsuper-valid}}
    $\mathtt{checkpoints} \gets \mathtt{checkpoints} + [(\mathtt{validation\_score}, h_{\theta})]$\;
}
Final debiased model, $h_{\theta^*} \gets \max_{\mathtt{validation\_score}} \mathtt{checkpoints}$

\end{algorithm}

In \Cref{alg:ula}, we provide a high-level description of the \textsc{uLA} methodology for training and validation of group-robust models without any bias annotations. In addition, a PyTorch~\citep{paszke2019pytorch} implementation is available at the following repository:  \url{https://github.com/tsirif/uLA}.
\section{Datasets}
\label{app:datasets}

\subsection{Systematic MPI3D}
\label{app:smpi3d}
\textsc{sMPI3D} is our contributed dataset which we use to study combinatorial systematicity in classifiers. By this term, we designate the ability of classifiers to generalize to samples generated from novel combinations of generative attributes, which under $\psource$ have $0$ probability. However, all the constituent values of the individual generative attributes in the novel combinations have been observed under some combination in the training data. The \textit{real} split of \textsc{MPI3D} \citep{Gondal2019mpi3d} consists of photographs of a robotic arm that has a colored rigid object attached to its end effector. The images are captured in a way that controls their generative attributes, such as the shape of the rigid object or the position of the robotic arm. 

We use this dataset to create development, and deployment splits to test for systematic generalization. In particular, we consider the shape of the rigid object to be the target variable $\yvar_\mathrm{shape}$, while its color to be the bias $\bvar_\mathrm{color}$. There are $6$ possible shapes for the objects and $6$ possible colors, totaling $36$ combinations of attributes. Figure~\ref{fig:mpi3d-splits} illustrates pairs of attributes used to sample an example systematic training split. A training set is created by assigning $C$ number of colors per shape so that: first, $\psource(\bobs_\mathrm{color} \,|\, \yobs_\mathrm{shape}) = \frac{1}{C}$ if $(\yobs_\mathrm{shape}, \bobs_\mathrm{color})$ exists in the elected pairs of attributes for training; otherwise $0$. Second, we make sure that marginally all colors and shapes are represented uniformly, which means that  $\psource(\yobs) = \frac{1}{6}$ and $\psource(\bobs) = \frac{1}{6}$. During testing, we desire to generalize to samples generated from $\ptarget$ which distributes pairs $(\yobs_\mathrm{shape}, \bobs_\mathrm{color})$ uniformly, covering combinations that are entirely missing from the training set. In our benchmarks, we are testing methods for $C \in \{2, 3, 4, 5\}$, against the same 5 sets of systematic splits which we generate independently per $C$.

\textbf{Systematic split generation.\,} As we described above, \textsc{sMPI3D} is generated using the \textit{real} ``split'' of the \textsc{MPI3D} dataset \citep{Gondal2019mpi3d}. The dataset is distributed as a tensor of shape $(6, 6, 2, 3, 3, 40, 40, 64, 64, 3)$, with the last 3 axis corresponding to pixel values of a $64 \times 64$ RGB image and the rest to coordinates for generative attribute values. The first two coordinates correspond to the color and shape attributes for the rigid object depicted in the image. The procedure for generating splits of \textsc{sMPI3D} is stochastic, meaning that it depends on a random seed set by the user. The reason for this is that we would like to benchmark methods against the possibility that some systematic splits are consistently easier to generalize than others. To generate a train-validation-test split of \textsc{sMPI3D}, we first generate the pattern of included-excluded combinations for the development dataset (training and validation datasets), like the one presented in \Cref{fig:mpi3d-splits}. The pattern depends on the argument $C$, the number of color values observed per shape value. Then, we randomly permute data points across the the first two axis of the data tensor, which correspond to the color and shape attributes. Afterwards, we split the data tensor according to whether a certain position $(\mathrm{color}, \mathrm{shape})$ belongs to the development dataset or not. We subsample the deployment dataset uniformly across i.i.d. combinations for $180k$ and $18k$ mutually exclusive data points for the training and validation sets respectively, for all $C \in \{2,3,4,5\}$. Finally, the remaining data points are combined and $54k$ images are sampled uniformly from all combinations to create the unbiased test set. We provide code for generating systematic splits on \textsc{MPI3D} in the supplementary material.

\subsection{Non-systematic Benchmarks}
\textbf{Colored MNIST.\,} (\textsc{cMNIST}) is an RGB version of MNIST dataset~\citep{LeCun1998mnist}, in which the digit target variable $\yvar_\mathrm{digit}$ is paired with a color bias variable $\bvar_\mathrm{color}$ to draw an image of that digit using a specific color. In this work, we study the dataset which follows the data generation procedure from \citet{nam2020lfl}. Each of the digits $k \in \{0,...,9\}$ is paired with a distinct color out of a choice of ten, with probability $\psource(\bvar_\mathrm{color}=k \,|\, \yvar_\mathrm{digit}=k) = 1-\beta$ of the available training samples for that digit $k$. The rest of the probability mass is split uniformly across the remaining color options, $\psource(\bvar_\mathrm{color}=l \,|\, \yvar_\mathrm{digit}=k) = \tfrac{\beta}{9}$ for $l \neq k$. Essentially, $\beta$ controls for the percentage of bias-conflicting samples in the training set. The goal is to train a classifier that performs well under the test set in which digits are paired with colors with random chance $1/10^2$. We obtain four different tasks by varying $\beta \in \{0.5\%, 1\%, 2\%, 5\%\}$.

\textbf{Corrupted CIFAR10.\,} (\textsc{cCIFAR10}) is a modification of the \textsc{CIFAR10} dataset~\citep{Krizhevsky2009cifar} by \citet{Hendrycks2019cc10}, in which images are affected by a type of texture noise which correlates with the original target category. Similarly to \textsc{cMNIST}, there are 10 different types of texture noise each of which is predominantly paired to a unique \textsc{CIFAR10} label. The training set is created under $\psource(\bvar_\mathrm{noise} = k \,|\, \yvar_\mathrm{cifar} = k) = 1 - \beta$ with the rest of probability spread uniformly to the bias-conflicting options of texture noise. Our goal again is to perform well under the test set in which pairs of texture noise and labels are distributed uniformly. Four development datasets are used in the experiments with $\beta \in \{0.5\%, 1\%, 2\%, 5\%\}$.

\textbf{Waterbirds.\,} Our study employs the \textsc{Waterbirds} dataset, devised by \citet{Sagawa2020GroupDRO}. This dataset is derived from the CUB dataset's bird images~\citep{wah2011caltech}, superimposed on backgrounds from the Places dataset~\citep{zhou2017places}. The dataset classifies seabirds and waterfowl as waterbirds, and all other species as landbirds. Ocean and natural lake backgrounds from Places are labeled as 'water background', while bamboo and broadleaf forest backgrounds are termed 'land background'. Four groups emerge: land background with waterbird and landbird, and water background with waterbird and landbird. The first two are minority groups due to fewer examples, while the latter two constitute majority groups. The dataset maintains the original training, validation, and test splits from \citet{Sagawa2020GroupDRO}. In training, 95\% of waterbirds and landbirds are respectively paired with water and land backgrounds, while we care for the worst-group test accuracy.

\textbf{CelebA.\,} We examine a fair classification task using the \textsc{CelebA} dataset~\citep{liu2015deep} of celebrity facial attributes. Same to \citet{Sagawa2020GroupDRO}, we consider the \textit{BlondHair} attribute to be the target variable. Then, a spurious correlation can naturally be observed with the binarized gender attribute \textit{Male}. Four groups emerge for which the minorities are males with blond hair, followed by females with non-blond hair. We use the standard splits defined in the literature, and we report the worst-group test accuracy.
%\clearpage
\section{Experiment Details}
\label{app:experiment_details}

\begin{table}[t]
    \centering
    \small{
    \begin{tabular}{lcccc}
    \toprule
                        &  Learning Rate & Batch Size & Weight Decay & Temperature \\
      \midrule
      \textsc{cMNIST} & $1.0$ & $256$ & $1\text{e-}3$ & $0.1$ \\ 
      \textsc{cCIFAR10} & $0.9$ & $256$ & $1\text{e-}4$ & $0.1$ \\ 
      \textsc{sMPI3D} & $0.9$ & $256$ & $1\text{e-}4$ & $0.1$ \\ 
      \textsc{CelebA} & $0.3$ & $128$ & $3\text{e-}5$ & $0.1$ \\ 
      \midrule
                        &  \multicolumn{4}{c}{SSL Augmentations} \\
      \midrule
      \textsc{cMNIST} & \multicolumn{4}{c}{RRC + Gaussian Blur} \\ 
      \textsc{cCIFAR10} & \multicolumn{4}{c}{RRC + Color Jitter + Gray Scale + Gaussian Blur + Horizontal Flip} \\ 
      \textsc{sMPI3D} & \multicolumn{4}{c}{RRC + Color Jitter + Gray Scale + Gaussian Blur + Horizontal Flip} \\ 
      \textsc{CelebA} & \multicolumn{4}{c}{RRC + Color Jitter + Solarization + Horizontal Flip} \\ 
    \bottomrule
    \end{tabular}
    }
    \caption{Hyperparameters used during SSL pretraining.}
    \label{tab:ssl_hyperparameters}
\end{table}

\begin{table}[t]
    \centering
    \small{
    \begin{tabular}{lccc}
    \toprule
                        &  Learning Rate & Weight Decay & $\eta$ \\
      \midrule
      \textsc{cMNIST}   & $\{5\text{e-}4,1\text{e-}3,2\text{e-}3,5\text{e-}3\}$ & $\{0.0,1\text{e-}4,1\text{e-}3,1\text{e-}2,1\text{e-}1\}$ & $\{1.0,1.25,1.5\}$ \\ 
      \textsc{cCIFAR10}   & $\{1\text{e-}5,5\text{e-}5,1\text{e-}4,5\text{e-}4,1\text{e-}3\}$ & $\{0.0,1\text{e-}4,1\text{e-}3,1\text{e-}2,1\text{e-}1\}$ &  $\{1.0,1.25,1.5,1.75,2.0\}$ \\ 
      \textsc{sMPI3D}   & $\{1\text{e-}5,5\text{e-}5,1\text{e-}4,5\text{e-}4,1\text{e-}3\}$ & $\{0.0,1\text{e-}4,1\text{e-}3,1\text{e-}2,1\text{e-}1\}$ & $1.0$\\ 
      \textsc{Waterbirds}  & $\{1\text{e-}4,5\text{e-}4,1\text{e-}3,5\text{e-}3\}$ & $\{0.0,1\text{e-}4,1\text{e-}2\}$ & $\{1.0,1.5,2.0,2.5,3.0\}$  \\ 
      \textsc{CelebA}   & $\{1\text{e-}5,5\text{e-}5,1\text{e-}4,5\text{e-}4\}$ & $\{0.0,1\text{e-}5,1\text{e-}4,1\text{e-}3,1\text{e-}2\}$ & $\{1.0,1.25,1.5\}$ \\
      \midrule
                        & $\tau$ & \multicolumn{2}{c}{$T_\mathrm{ssl}$/$T_\mathrm{stop}$} \\
      \midrule
      \textsc{cMNIST}   & $\{0.1,1.0,2.0\}$ & \multicolumn{2}{c}{$\{10,20,50,100\}$ epochs} \\ 
      \textsc{cCIFAR10}   & $\{0.5,1.0,2.0\}$ & \multicolumn{2}{c}{$\{100,200,300,400,500\}$ epochs} \\ 
      \textsc{sMPI3D}   & $1.0$ & \multicolumn{2}{c}{$\{10,20,50,100\}$ epochs} \\ 
      \textsc{Waterbirds}  & $\{0.1,0.5,1.0,2.0\}$ & \multicolumn{2}{c}{$\{50,100,200,500,1000,2000,5000\}$ steps} \\ 
      \textsc{CelebA}   & $\{0.5,1.0,2.0\}$ & \multicolumn{2}{c}{$\{10,20,50,100\}$ epochs} \\ 
    \bottomrule
    \end{tabular}
    }
    \caption{Search space used to tune hyperparameters for each of the datasets considered}
    \label{tab:hyperparameter_priors}
\end{table}

\begin{table}[b]
    \centering
    \small{
    \begin{tabular}{lcccccc}
    \toprule
                        &  Learning Rate & Weight Decay & $\eta$ & $\tau$ & $T_\mathrm{ssl}$ (epochs) & $T_\mathrm{stop}$ \\
      \midrule
      \textsc{cMNIST} (1\%)   & $2\text{e-}3$ & $1\text{e-}3$ & $1.5$ & $2.0$ & $100$ & $100$ epochs \\ 
      \textsc{cCIFAR10} (1\%) & $5\text{e-}5$ & $1\text{e-}4$ & $1.0$ & $1.0$ & $500$ & $500$ epochs \\ 
      \textsc{sMPI3D} ($C=3$) & $5\text{e-}5$ & $1\text{e-}4$ & $1.0$ & $1.0$ & $100$ & $100$ epochs \\ 
      \textsc{Waterbirds} & $1\text{e-}3$ & $0.0$ & $3.0$ & $0.1$ & - & $50$ steps \\ 
      \textsc{CelebA} & $1\text{e-}4$ & $0.0$ & $1.5$ & $0.5$ & $10$ & $10$ epochs \\ 
    \bottomrule
    \end{tabular}
    }
    \caption{Best hyperparameters selected by our proposed bias-unsupervised validation score.}
    \label{tab:best_hyperparameters}
\end{table}

\textbf{Architecture, Optimization, Augmentation and Validation.\,} For \textsc{cMNIST}, we train a 3-hidden layer MLP with 100 hidden neurons for each layer, while we use a ResNet18~\citep{he2016deep} for \textsc{cCIFAR10} and \textsc{sMPI3D}, and a ResNet50 for \textsc{Waterbirds} and \textsc{CelebA}. For all datasets except \textsc{Waterbirds}, we pretrain the base model with the \textsc{MoCoV2+}~\citep{chen2020mocov2} process, while for \textsc{Waterbirds} we use an already pretrained ResNet50 on Imagenet~\citep{russakovsky2015imagenet}. Training of the linear probe for the bias network and finetuning for the logit adjusted debiased network happen with AdamW~\citep{loshchilov2017decoupled} optimizer using default momentum hyperparameters $\beta = (0.9, 0.999)$, except for \textsc{Waterbirds} where we use SGD with $0.9$ momentum. The learning rate for all dataset, excluding \textsc{Waterbirds}, is scheduled with a cosine decaying rule. Batch size is set to $256$ for \textsc{cMNIST}, \textsc{cCIFAR10}, \textsc{sMPI3D} and \textsc{CelebA}, and $64$ for \textsc{Waterbirds}. We use minimal data augmentation for training the linear probe, as well as the debiased logit adjusted model, following the baselines we compare against for fair comparison. In more details, we use random resized crops (RRC) and random horizontal flipping for all datasets, except \textsc{cMNIST} where no augmentation is used. Finally, training happens for a maximum of $100$ epochs for all datasets (except \textsc{Waterbirds} which we train for $200$), and the best model across the training duration is selected for each trial, by evaluating our proposed validation score at the end of each epoch and monitoring for the maximum value. For \textsc{cMNIST}, \textsc{cCIFAR10} and \textsc{sMPI3D}, we use our bias-unsupervised group-balanced validation score (see \Cref{eq:unsuper-valid}), whereas for \textsc{Waterbirds} and \textsc{CelebA} the corresponding worst-group version. 

\textbf{Self-supervised Pretraining.\,} For all cases, we adapt self-supervised pretraining recipes from \textit{solo-learn} software library~\citep{daCost2022solo}. In particular, we optimize the \textsc{MoCoV2+} loss with a LARS optimizer~\citep{you2017large} using $\eta_\mathrm{LARS} = 0.002$. Furthermore, an MLP projector with a single hidden layer is used to project features to a hypersphere in a $256$-dimensional space. Cosine scheduling is used for the learning rate and for the exponential moving average coefficient of the momentum encoder. We train for a maximum of $100$ epochs for the \textsc{cMNIST}, \textsc{sMPI3D} and \textsc{CelebA} tasks, and for $500$ epochs for the \textsc{cCIFAR10} datasets. We save periodically checkpoints of base models across training, as we are going to decide which one to use as a basis for the bias proxy and for the debiased model as a hyperparameter of  \textsc{uLA}. For each dataset, we perform a small random search over contrastive temperature and data augmentation by maximizing the i.i.d. validation accuracy of an online linear classifier on-top of extracted representations. In general, we fix the contrastive temperature to $0.1$ and use default Imagenet augmentations for \textsc{sMPI3D} and \textsc{cCIFAR10}, while we search for effective variants of those for \textsc{cMNIST} and \textsc{CelebA}. Details about the hyperparameter combinations used are displayed at \Cref{tab:ssl_hyperparameters}.

In all cases, we verify that the augmentations at \Cref{tab:ssl_hyperparameters} \textbf{do not cause representations to be invariant to variations of the bias attribute}. We take extra care in order to delineate that, especially for synthetic tasks, improvements are observed due to our \textsc{uLA} methodology, and not due to the use of data augmentations which nullify the bias attribute; something which, in real-life tasks such as \textsc{CelebA}, might be impossible to do with hand-crafted transformation. We classify against the bias attribute using a linear classifier on extracted representations, and we find that, for the hyperparameters used, bias information is indeed retained (see \Cref{tab:ssl_online_classification}).  

\textbf{Hyperparameter Search for \textsc{uLA}.\,} For each dataset and task, we search over the respective spaces defined at \Cref{tab:hyperparameter_priors}. In particular, we sample uniformly $64$ independent configurations ($128$ for \textsc{Waterbirds}) and we choose the one with the best bias-unsupervised validation score (\Cref{eq:unsuper-valid}). To make the space a bit smaller, we consider the same weight decay for the training of the bias proxy and the debiased logit adjusted network, as well as we consider the same number of training epochs for SSL-pretrained base network and the linear head for the bias proxy (which we train online in our implementation). For \textsc{Waterbirds}, the linear head is trained on top of a pretrained ResNet50 on Imagenet, which is taken from a PyTorch~\citep{paszke2019pytorch} model repository. The pretrained model's parameters remain frozen during the training of the bias proxy, so in this case there is no number of pretraining steps to select. The hyperparameters used for the benchmarking experiments, that we presented in \Cref{sec/experiments}, can be found at \Cref{tab:best_hyperparameters}.
\section{Extended Experiments}
\label{app:extended_experiments}

\begin{figure}[H]
  % \begin{minipage}[b]{0.49\linewidth}
     \begin{subfigure}[h]{0.6\textwidth}
         \centering
         \includegraphics[width=\textwidth]{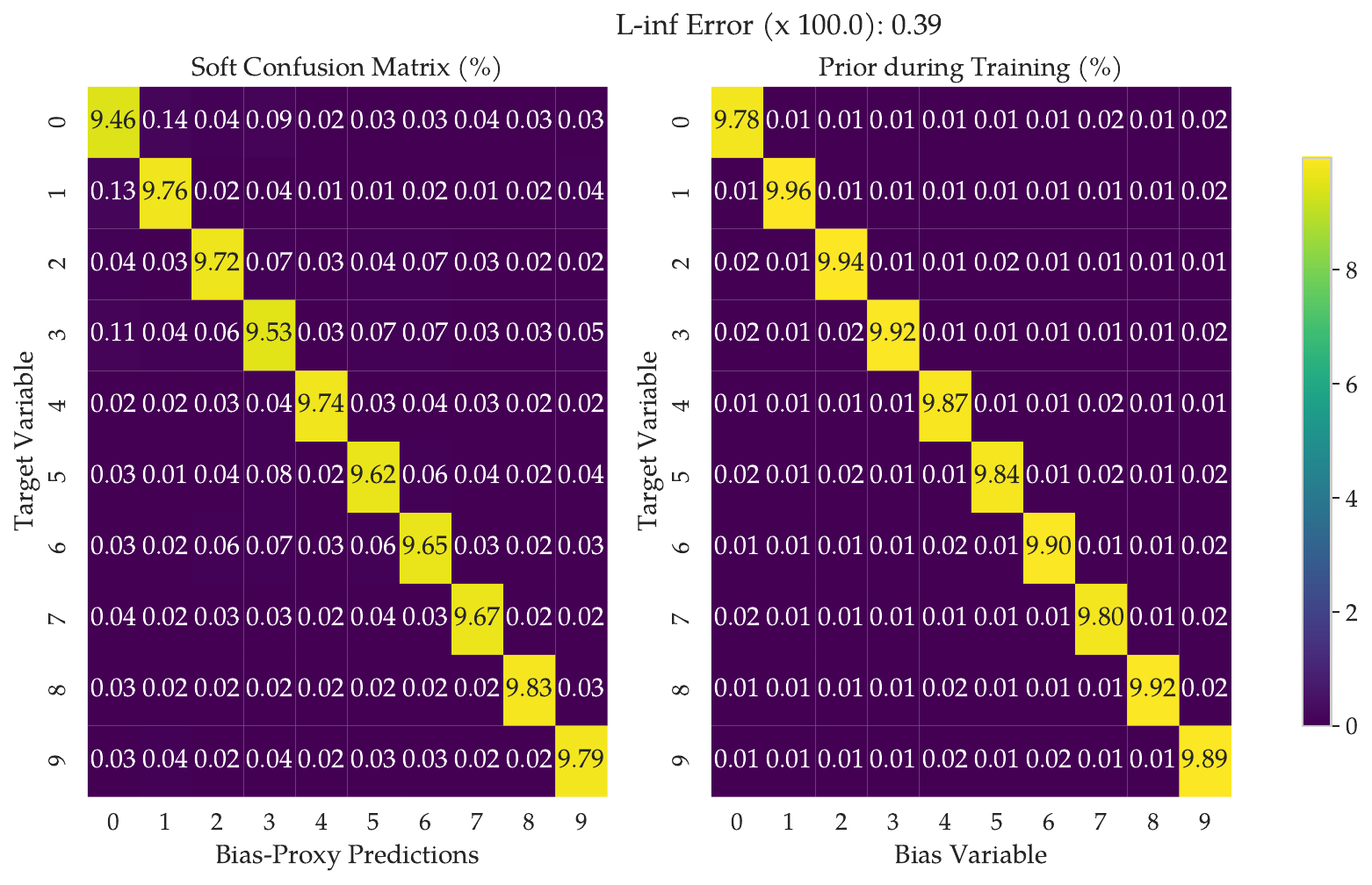}
         \caption{\textbf{Left:} Soft confusion matrix estimate using $p_\mathrm{bias}(y_\mathrm{bias}|x)$ of the bias proxy and the formula of Eq. 5 in the main paper. \textbf{Right:} Estimated prior distribution between the target and bias variables using ground-truth bias observations.}
         \label{fig:soft_confusion_matrix}
     \end{subfigure}
     \hfill
     \begin{subfigure}[h]{0.35\textwidth}
         \centering
         \includegraphics[width=\textwidth]{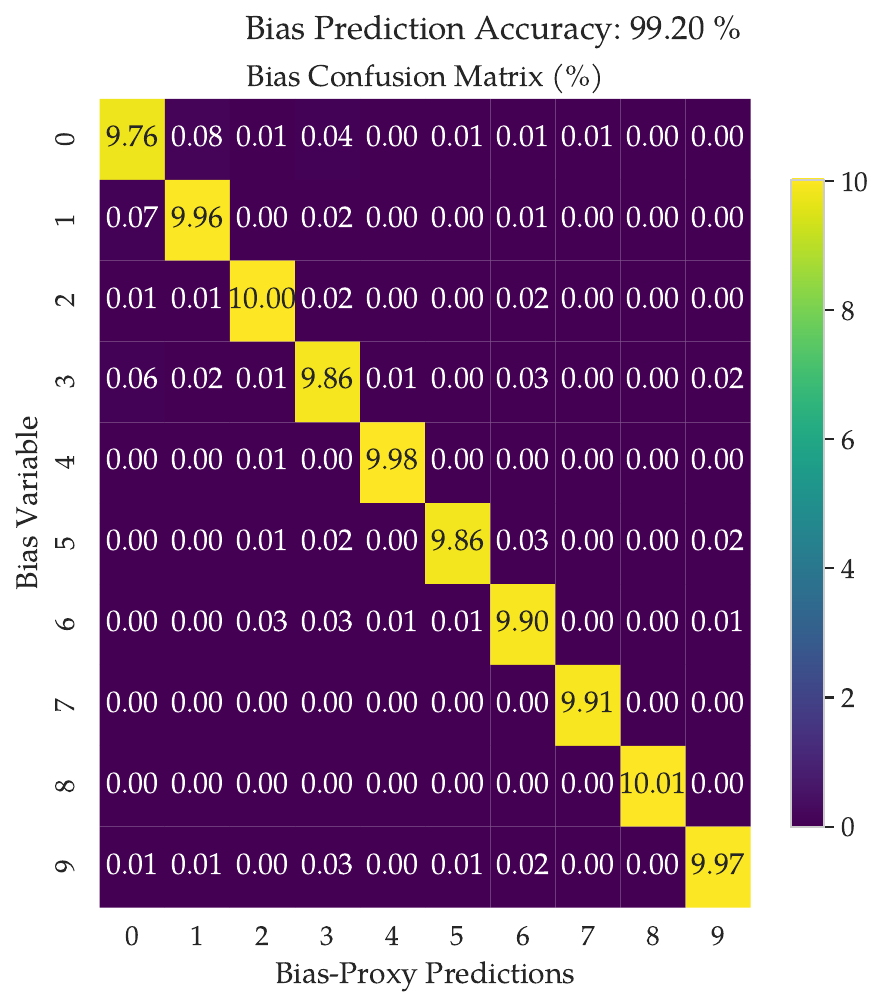}
         \caption{Accuracy of predicting the bias variable via $\arg\max_y p_\mathrm{bias}(y_\mathrm{bias}|x)$, even though the bias proxy was trained on target variables.}
         \label{fig:bias_predictions_accuracy}
     \end{subfigure}
     \caption{Heatmaps displaying the effectiveness of the proposed method to derive a proxy network for predicting the bias variable. Backbone is pretrained using BYOL on \textsc{cCIFAR10} (1\%).}
     \label{fig:bias_proxy_effectiveness}
\end{figure}

\begin{figure}[H]
  % \begin{minipage}[b]{0.49\linewidth}
     \begin{subfigure}[h]{0.47\textwidth}
         \centering
         \includegraphics[width=\textwidth]{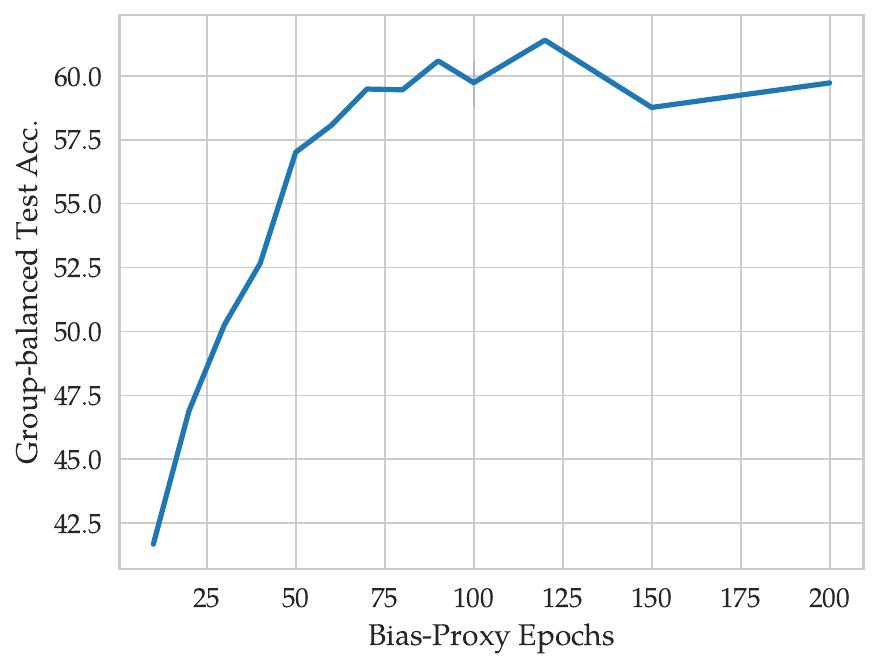}
         % \caption{Ablation of final group-balanced test accuracy (\%) on \textsc{cCIFAR10} (1\%) with respect to the number of epochs we train the linear head on-top of a pretrained backbone with BYOL to produce the proxy network for the bias variable.}
         \label{fig:bias_proxy_epoch_ablation}
     \end{subfigure}
     \hfill
     \begin{subfigure}[h]{0.47\textwidth}
         \centering
         \includegraphics[width=\textwidth]{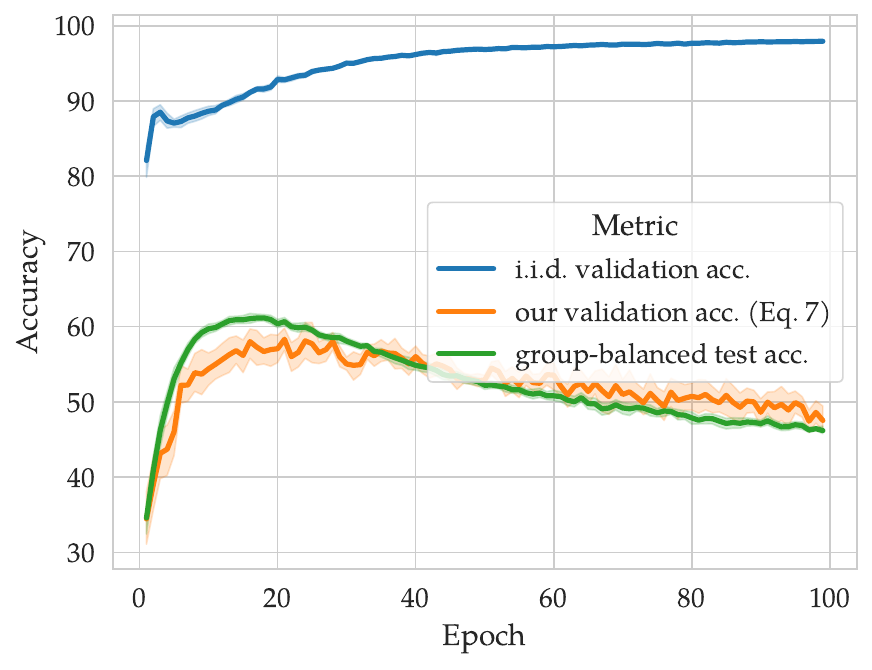}
         % \caption{Progression of accuracy metrics (\%) during training of a debiased network with \textsc{uLA}. Our validation accuracy (orange) tracks better the group-balanced test accuracy (green), than the standard in-distribution validation accuracy. Model selection with our criterion will result in a better model (close to the mode of the test curve) compared to the one using i.i.d. validation.}
         \label{fig:training_curve}
     \end{subfigure}
     \caption{\textbf{Left:} Ablation of final group-balanced test accuracy (\%) on \textsc{cCIFAR10} (1\%) with respect to the number of epochs we train the linear head on-top of a pretrained backbone with BYOL to produce the proxy network for the bias variable. \textbf{Right:} Progression of accuracy metrics (\%) during training of a debiased network with \textsc{uLA}. Our validation accuracy (orange) tracks better the group-balanced test accuracy (green), than the standard in-distribution validation accuracy. Model selection with our criterion will result in a better model (close to the mode of the test curve) compared to the one using i.i.d. validation.}
     \label{fig:training_curves}
\end{figure}

\end{document}